\providecommand{\algorithmname}{Algorithm}
\theoremstyle{plain}
\newtheorem{thm}{\protect\theoremname}
\theoremstyle{plain}
\newtheorem{assumption}{Assumption}
\theoremstyle{remark}
\newtheorem{rem}{Remark}
\theoremstyle{plain}
\theoremstyle{plain}
\theoremstyle{plain}
\newtheorem{lem}{Lemma}
\theoremstyle{plain}
\renewcommand{\cite}{\citep}
\renewcommand{\citet}{\citep}
\renewcommand{\citealt}{\citep}
\newcommand{\algmargin}{\the\ALG@thistlm}
\newlength{\whilewidth}
\algnewcommand{\parState}[1]{\State%
	\parbox[t]{\dimexpr\linewidth-\algmargin}{\strut #1\strut}}
\providecommand{\theoremname}{Theorem}
\title{Dynamic Regret of Policy Optimization in Non-stationary Environments}
\author{%
    \normalsize Yingjie Fei\thanks{School of Operations Research and Information Engineering, Cornell University; \texttt{yf275@cornell.edu}}
    \qquad
    \normalsize Zhuoran Yang\thanks{Department of Operations Research and Financial Engineering, Princeton University; \texttt{zy6@princeton.edu}} 
    \qquad
    \normalsize Zhaoran Wang\thanks{Department of Industrial Engineering and Management Sciences, Northwestern University; \texttt{zhaoranwang@gmail.com}}
    \qquad
    \normalsize Qiaomin Xie\thanks{School of Operations Research and Information Engineering, Cornell University; \texttt{qiaomin.xie@cornell.edu}}
}
\date{}
\begin{document}
\global\long\def\cT{{\cal T}}%
\global\long\def\cC{{\cal C}}%
\global\long\def\cM{{\cal M}}%
\global\long\def\cN{{\cal N}}%
\global\long\def\cV{{\cal V}}%
\global\long\def\cF{{\cal F}}%
\global\long\def\cR{{\cal R}}%
\global\long\def\cP{{\cal P}}%
\global\long\def\cG{{\cal G}}%
\global\long\def\cB{{\cal B}}%
\global\long\def\cD{{\cal D}}%
\global\long\def\cX{{\cal X}}%
\global\long\def\cS{{\cal S}}%
\global\long\def\cA{{\cal A}}%
\global\long\def\cY{{\cal Y}}%
\global\long\def\cL{{\cal L}}%
\global\long\def\cQ{{\cal Q}}%
\global\long\def\cU{{\cal U}}%
\global\long\def\real{\mathbb{R}}%
\global\long\def\E{\mathbb{E}}%
\global\long\def\J{\mathbb{J}}%
\global\long\def\P{\mathbb{P}}%
\global\long\def\pol{\pi}%
\global\long\def\indic{\mathbb{I}}%
\global\long\def\Ps{P^{\star}}%
\global\long\def\Php{\hat{P}^{\pol}}%
\global\long\def\Pp{P^{\pol}}%
\global\long\def\Vs{V^{\star}}%
\global\long\def\Vp{V^{\pol}}%
\global\long\def\Qs{Q^{\star}}%
\global\long\def\Qp{Q^{\pol}}%
\global\long\def\vhat{\hat{v}}%
\global\long\def\zhat{\hat{z}}%
\global\long\def\e{\mathbf{e}}%
\global\long\def\g{\mathbf{g}}%
\global\long\def\w{\mathbf{w}}%
\global\long\def\v{\mathbf{v}}%
\global\long\def\fmap{\phi}%
\global\long\def\pmap{\mu}%
\global\long\def\IdMat{\mathbf{I}}%
\global\long\def\bP{\mathbf{P}}%
\global\long\def\A{\mathbf{A}}%
\global\long\def\ucbrate{\lambda_{\text{UCB}}}%
\global\long\def\tmp{\beta}%
\global\long\def\diag{\mathop{\text{diag}}}%
\global\long\def\argmin{\mathop{\text{argmin}}}%
\global\long\def\argmax{\mathop{\text{argmax}}}%
\newcommandx\norm[2][usedefault, addprefix=\global, 1=\#1]{\ensuremath{\left\Vert #1\right\Vert {}_{#2}}}%
\global\long\def\conf{\text{conf}}%
\global\long\def\lse{\mathsf{lse}}%
\renewcommandx\norm[2][usedefault, addprefix=\global, 1=\#1]{\Vert#1\|_{#2}}%
\global\long\def\reg{\textup{Regret}}%
\global\long\def\dreg{\textup{D-Regret}}%
\global\long\def\kl{D_{\text{KL}}}%
\global\long\def\plen{\tau}%
\global\long\def\pnum{L}%
\global\long\def\Qchange{D_{T}}%
\global\long\def\polchange{P_{T}}%

\maketitle

\begin{abstract}
  We consider reinforcement learning (RL) in episodic MDPs with adversarial full-information 
  reward feedback and unknown fixed  transition kernels. We propose two model-free policy optimization algorithms,
  POWER and POWER++, and establish guarantees for their dynamic regret. Compared
  with the classical notion of static regret, dynamic regret is a stronger
  notion as it explicitly accounts for the non-stationarity of
  environments. The dynamic regret attained by the proposed
  algorithms interpolates between different regimes of non-stationarity,
  and moreover satisfies a notion of adaptive (near-)optimality, in
  the sense that it matches the (near-)optimal static regret under slow-changing environments.
  The dynamic regret bound features two components, one arising
  from exploration, which deals with the uncertainty of transition kernels,
  and the other arising from adaptation, which deals with non-stationary
  environments. 
  Specifically, we show that POWER++ improves over POWER on the
  second component of the dynamic regret by  actively adapting
  to non-stationarity through prediction.
  To the best of our knowledge,
  our work is the first dynamic regret analysis of model-free RL algorithms
  in non-stationary environments.
\end{abstract}

\section{Introduction} \label{sec:intro}

Classical reinforcement learning (RL) literature often evaluates an
algorithm by comparing its performance with that of the best \emph{fixed} (i.e., stationary) policy in hindsight, where the difference is 
commonly known as regret. Such evaluation metric implicitly assumes
that the environment is static so that it is appropriate to compare an algorithm 
to a single best policy. However, as we advance towards modern and
practical RL problems, we face challenges arising in dynamic and
non-stationary environments for which comparing against a single policy is no longer sufficient.

Two of the most prominent examples of RL for non-stationary environments
are continual RL \citep{kaplanis2018continual} and meta RL \citep{duan2016rl,wang2016learning}
(and more broadly meta learning \citep{finn2017model,finn2019online}),
which are central topics in the study of generalizability of RL algorithms.
In these settings, an agent encounters a stream of tasks throughout
time and aims to solve each task with knowledge accrued via solving
previous tasks. The tasks can be very different in nature from each
other, with potentially increasing difficulties. In particular, the
reward mechanism may vary across tasks, and therefore requires
the agent to adapt to the change of tasks. 
Another example of
RL under non-stationary environments is human-machine interaction~\citep{hadfield2016cooperative,radanovic2019learning}.
This line of research studies how humans and machines (or robots) should interact or collaborate to accomplish certain goals. In one scenario, a human teaches a robot to complete a task by assigning appropriate rewards to the robot but without intervening its dynamics. The rewards from the human can depend on the stage of the learning process and the rate of improvement in the robot's behaviors. Therefore, the robot has to adjust its policy over time to maximize the rewards it receives.

In the above examples, 
it is uninformative to compare an algorithm
with a fixed stationary policy, which itself may not perform well given the rapidly changing nature of environments.
It is also unclear whether existing algorithms, designed for static environments
and evaluated by the standard notion of regret, are sufficient for tackling non-stationary problems. 

We aim to address these challenges in this paper. We consider the setting
of episodic Markov decision processes (MDPs) with adversarial
full-information reward feedback and unknown fixed transition kernels. We are interested in the notion of \emph{dynamic}
regret, the performance difference between an algorithm and the set
of policies optimal for \emph{individual} episodes in hindsight. For non-stationary RL, dynamic regret
is a significantly stronger and more appropriate notion of performance measure than the standard (static) regret, but on the other hand more challenging for algorithm design and analysis.
We propose two efficient, model-free policy
optimization algorithms,  POWER and POWER++. Under a mild regularity condition of MDPs,
we provide dynamic regret analysis for both algorithms and we show that the regret bounds interpolate bewteen different regimes of non-stationarity. In particular, the bounds are of order 
$\tilde{O}(T^{1/2})$ 
when the underlying model is nearly stationary,
matching with existing near-optimal static regret bounds. In that sense, our algorithms 
are \textit{adaptively} near-optimal in slow-varying environments. To the best of our knowledge, we provide the first dynamic regret analysis for model-free RL algorithms under non-stationary
environments. 
 
Our dynamic regret bounds naturally decompose
into two terms, one due to maintaining optimism and encouraging exploration
in the face of uncertainty associated with the transition kernel, and
the other due to the changing nature of reward functions. 
This decomposition highlights the two main components an RL algorithm needs in order to perform well in non-stationary environments: effective
exploration under uncertainty and self-stabilization under drifting
reward signals. Our second algorithm, POWER++, 
takes advantage of active prediction and 
improves over POWER in terms of the second term in the dynamic regret bounds.

\paragraph{Our contributions.} 
The contributions of our work can be summarized as follows:
\begin{itemize}
    \item We propose two model-free policy optimization algorithms, POWER and POWER++, for non-stationary RL with adversarial rewards;
    \item We provide dynamic regret analysis for both algorithms, and the regret bounds are applicable across all regimes of non-stationarity of the underlying model;
    \item When the environment is nearly stationary, our dynamic regret bounds are of order $\tilde{O}(T^{1/2})$ and match the near-optimal static regret bounds, thereby demonstrating the adaptive near-optimality of our algorithms in slow-changing environments.
\end{itemize}

\paragraph{Related work.}

Dynamic regret has been considered for RL in several papers. The work of \citealt{jaksch2010near}
considers the setting of online MDP in which the transition kernel
and reward function are allowed to change $l$ times, and the regret
compares the algorithm against optimal policies for each of the $l+1$
periods. It proposes UCRL2 with restart, which  achieves
an $\tilde{O}((l+1)^{1/3}T^{2/3})$ regret where $T$ is the number
of timesteps. The work of \citet{gajane2018sliding} considers the
same setting and shows that UCRL2 with sliding windows
achieves the same regret. 
Generalizing the previous settings, the work of \citealt{ortner2019variational}
studies the setting where the changes of model is allowed to take place in every timestep. It proves that UCRL with restart achieves a regret of $\tilde{O}((B_{r}+B_{p})^{1/3}T^{2/3})$ for sufficiently large $B_{r},B_{p}>0$, 
where $B_{r}$ and $B_p$ are the variations of rewards and transition kernels over the $T$ timesteps, respectively.
The work of \citet{cheung2019reinforcement} proposes
the sliding-window UCRL2 with confidence widening, which achieves an $\tilde{O}((B_{r}+B_{p}+1)^{1/4}T^{3/4})$ regret; under additional regularity conditions, the regret can be improved to $\tilde{O}((B_{r}+B_{p}+1)^{1/3}T^{2/3})$. A Bandit-over-RL algorithm is also provided by \cite{cheung2019reinforcement} to adaptively tune the UCRL2-based algorithm to achieve an $\tilde{O}((B_{r}+B_{p}+1)^{1/4}T^{3/4})$ regret without knowing $B_r$ or $B_p$.
The work \cite{lykouris2019corruption} considers the setting of episodic MDPs in which reward functions and transition kernels get corrupted by an adversary in $K_0$ episodes. It proposes an algorithm called CRANE-RL that achieves a regret of $\tilde{O}(K_0 \sqrt{T} + K_0^2)$.  
We remark that all the work discussed so far study model-based algorithms, and we refer interested readers to  \citet{padakandla2020survey} for an excellent survey on the topic of RL in non-stationary environments. 
Dynamic regret has also been studied under the settings of multi-armed
bandits \citep{besbes2014stochastic,karnin2016multi,keskin2017chasing,luo2018efficient,auer2019adaptively,besson2019generalized,chen2019new,cheung2019learning,russac2019weighted}, 
online convex optimization 
\citep{zinkevich2003online,hall2013dynamical,besbes2015non,hall2015online,jadbabaie2015online,yang2016tracking,shahrampour2017distributed,zhang2017improved,zhang2018adaptive,zhang2018dynamic,zhao2018proximal,ravier2019prediction,roy2019multi,zhao2019bandit,zhao2019understand,zhang2020minimizing} 
and games \citep{duvocelle2018learning}. Interestingly, the notion of dynamic regret is related to the exploitability of strategies in two-player zero-sum games \citep{davis2014using}. We would also like to mention a series of papers that consider the setting of non-stationary MDPs 
\citep{dick2014online, cardoso2019large, yu2009markov, yu2009online, even2005experts, abbasi2013online, neu2010online, neu2010loop, neu2012adversarial, arora2012deterministic, rosenberg2019online, rosenberg2019shortest, jin2019learning, radanovic2019learning}, although they focus on static regret analysis.

\paragraph{Notations.}

For a positive integer $n$, we let $[n]\coloneqq\{1,2,\ldots,n\}$.
We write $x^{+}=\max\{x,0\}$ for a scalar or vector $x$, where the
maximum operator is applied elementwise.  For two non-negative sequences
$\{a_{i}\}$ and $\{b_{i}\}$, we write $a_{i}\lesssim b_{i}$ if
there exists a universal constant $C>0$ such that $a_{i}\le Cb_{i}$
for all $i$. We write $a_{i}\asymp b_{i}$ if $a_{i}\lesssim b_{i}$
and $b_{i}\lesssim a_{i}$. We use $\tilde{O}(\cdot)$ to denote $O(\cdot)$
while hiding logarithmic factors. We use $\norm[\cdot]{}$ or $\norm[\cdot]2$
to denote the $\ell_{2}$ norm of a vector or spectral norm of a matrix,
and $\norm[\cdot]1$ for the $\ell_{1}$ norm of a vector. We denote
by $\Delta(\cX)$ the set of probability distributions supported on
a discrete set $\cX$. We define 
\[
\Delta(\cX\mid\cY,H)\coloneqq\left\{ \left\{ \pi_{h}(\cdot\mid\cdot)\right\} _{h\in[H]}:\pi_{h}(\cdot\mid y)\in\Delta(\cX)\text{ for any }y\in\cY\text{ and }h\in[H]\right\} 
\]
for any set $\cY$ and horizon length $H\in\mathbb{Z}_{>0}$. 
For
$p_{1},p_{2}\in\Delta(\cX)$, we define $\kl(p_{1}\|p_{2})$ to be
the KL divergence between $p_{1}$ and $p_{2}$, that is, 
$
\kl(p_{1}\|p_{2})\coloneqq\sum_{x\in\cX}p_{1}(x)\log\left(\frac{p_{1}(x)}{p_{2}(x)}\right)$.

\section{Preliminaries \label{sec:setup}}

\subsection{Episodic MDPs and dynamic regret \label{sec:MDP}}

In this paper, we study RL in non-stationary environments via episodic MDPs
with adversarial full-information reward feedback and unknown fixed transition kernels.
An episodic MDP is defined by the state space $\cS$, the action
space $\cA$, the length $H$ of each episode, the transition kernels
$\{\cP_{h}(\cdot\mid\cdot,\cdot)\}_{h \in [H]}$ 
and the
reward functions $\{r_{h}^{k}:\cS\times\cA\to[0,1]\}_{(k,h)\in [K]\times[H]}$.
We assume that the reward functions are deterministic and potentially different across episodes, 
and that both $\cS$ and $\cA$ are discrete sets of sizes 
$S\coloneqq\left|\cS\right|$ and 
$A\coloneqq\left|\cA\right|$, respectively.

An agent interacts with the MDP through $K$ episodes without knowledge of $\{\cP_{h}\}$. At the beginning of episode
$k$, the environment provides an arbitrary state $s_{1}^{k}$ to
the agent and chooses reward functions $\{r_{h}^{k}\}_{h\in[H]}$. The choice of the reward functions is possibly
adversarial and may depend on the history of the past $(k-1)$ episodes. In step $h$ of episode $k$, the agent observes state
$s_{h}^{k}$ and then takes an action $a_{h}^{k}$, upon which the
environment transitions to the next state $s_{h+1}^{k}\sim\cP(\cdot\mid s_{h}^{k},a_{h}^{k})$.
At the same time, the environment also reveals the reward function $r_{h}^{k}$ to the agent, and the agent receives the reward
$r_{h}^{k}(s_{h}^{k},a_{h}^{k})$ (known as the full-information setting). At step $H+1$, the agent observes
state $s_{H+1}^{k}$ but does not take any action (therefore receiving
no reward), and episode $k$ is completed. We denote by $T\coloneqq KH$
the total number of steps taken throughout the $K$ episodes.

For any fixed policy $\pi=\{\pi_{h}\}_{h\in[H]}\in\Delta(\cA\mid\cS,H)$ and any $(k,h,s,a)\in[K]\times[H]\times\cS\times\cA,$
we define the value function $V_{h}^{\pi,k}:\cS\to\real$ as 
\[
V_{h}^{\pi,k}(s)\coloneqq\E_{\pi}\left[\sum_{i=h}^{H}r_{i}^{k}(s_{i},a_{i})\ \Bigg|\ s_{h}=s\right],
\]
and the corresponding action-value function $Q_{h}^{\pi,k}:\cS\times\cA\to\real$
as 
\[
Q_{h}^{\pi,k}(s,a)\coloneqq\E_{\pi}\left[\sum_{i=h}^{H}r_{i}^{k}(s_{i},a_{i})\ \Bigg|\ s_{h}=s,a_{h}=a\right].
\]
Here, the expectation
$\E_{\pi}[\cdot]$ is taken over the randomness of the state-action
tuples $\{(s_{h},a_{h},s_{h+1})\}_{h\in[H]}$, where the action $a_{h}$ is
sampled from the policy $\pi_{h}(\cdot\mid s_{h})$ and the next state
$s_{h+1}$ is sampled from the transition kernel $\cP_{h}(\cdot\mid s_{h},a_{h})$.
The Bellman equation is given by 
\begin{equation}
\label{eq:bellman}
Q_{h}^{\pi,k}(s,a)  =r_{h}^{k}+\P_{h}V_{h+1}^{\pi,k},\qquad
V_{h}^{\pi,k}(s) \coloneqq\left\langle Q_{h}^{\pi,k},\pi_{h}\right\rangle _{\cA},\qquad V_{H+1}^{\pi}(s)=0.
\end{equation}
In Equation \eqref{eq:bellman}, we use $\left\langle \cdot,\cdot\right\rangle _{\cA}$
to denote the inner product over $\cA$ and we will omit the subscript
$\cA$ in the sequel when appropriate;  we also define the operator
\[
(\P_{h}f)(s,a)\coloneqq\E_{s'\sim\cP_{h}(\cdot\ |\ s,a)}[f(s')]
\]
for any function $f:\cS\to\real$.

Under the setting of episodic MDPs, the agent aims to approximate the optimal non-stationary
policy by interacting with the environment. Let $\pi^{*,k}=\argmax_{\pi\in\Delta(\cA\mid\cS,H)}V_{1}^{\pi,k}(s_{1}^{k})$
be the optimal policy of episode $k$, and suppose that the agent
executes policy $\pi^{k}$ in episode $k$. The difference in values
between $V_{1}^{\pi^{k},k}(s_{1}^{k})$ and $V_{1}^{\pi^{*,k},k}(s_{1}^{k})$
serves as the regret or the sub-optimality of the agent's
policy $\pi^{k}$ in episode $k$. Therefore, the \textit{dynamic regret} for $K$ episodes is defined as
\begin{align}
\dreg(K) & \coloneqq\sum_{k\in[K]}\left[V_{1}^{\pi^{*,k},k}(s_{1}^{k})-V_{1}^{\pi^{k},k}(s_{1}^{k})\right].\label{eq:dynamic_regret}
\end{align}
Dynamic regret is a stronger notion than the classical regret
measure found in the literature of online learning and reinforcement
learning, which is also known as static regret and defined as 
\begin{equation}
\reg(K)\coloneqq\sum_{k\in[K]}\left[V_{1}^{\pi^{*},k}(s_{1}^{k})-V_{1}^{\pi^{k},k}(s_{1}^{k})\right],\label{eq:static_regret}
\end{equation}
where $\pi^{*}=\argmax_{\pi\in\Delta(\cA\mid\cS,H)}\sum_{k\in[K]}V_{1}^{\pi,k}(s_{1}^{k})$.
In words, dynamic regret compares the agent's policy to the optimal policy
of \emph{each individual} episode in the hindsight, while static
regret compares the agent's policy to only the optimal fixed policy over
all episodes combined. Therefore, the notion of dynamic regret is
a more natural measure of performance under non-stationary environments.
It is clear that  dynamic regret always upper bounds   static regret: 
\begin{align*}
\dreg(K) & =\sum_{k\in[K]}\left[\max_{\pi\in\Delta(\cA\mid\cS,H)}V_{1}^{\pi,k}(s_{1}^{k})-V_{1}^{\pi^{k},k}(s_{1}^{k})\right]\\
& \ge\max_{\pi\in\Delta(\cA\mid\cS,H)}\sum_{k\in[K]}\left[V_{1}^{\pi,k}(s_{1}^{k})-V_{1}^{\pi^{k},k}(s_{1}^{k})\right]
 =\reg(K).
\end{align*}
When $\{\pi^{*,k}\}$ happen to be identical for all episodes $k\in[K]$, dynamic
regret reduces to static regret. 

\subsection{Model assumptions}

For any policy $\pi$, step $h\in[H]$ and states $s,s'\in\cS$, we
denote by $\cP_{h}^{\pi}(s'\ |\ s)$ the probability of transitioning
from $s$ to $s'$ in step $h$ when policy $\pi$ is executed, i.e.,
$\cP_{h}^{\pi}(s'\ |\ s)\coloneqq\sum_{a\in\cA}\cP_{h}(s'\ |\ s,a)\cdot\pi_{h}(a\ |\ s)$. 
The quantity 
$\cP_{h}^{\pi}$ 
is also known as the visitation measure of $\pi$ at state $s$ and step $h$.
For any pair of policies $\pi$ and $\pi'$, we define the shorthands
\begin{align*}
\norm[\pi_h-\pi'_h]{\infty} & \coloneqq\max_{s\in\cS}\norm[\pi_h(\cdot\ |\ s)-\pi'_h(\cdot\ |\ s)]1,\\
\norm[\cP_{h}^{\pi}-\cP_{h}^{\pi'}]{\infty} & \coloneqq\max_{s\in\cS}\norm[\cP_{h}^{\pi}(\cdot\ |\ s)-\cP_{h}^{\pi'}(\cdot\ |\ s)]1.
\end{align*}
The following assumption stipulates that the visitation measures are smooth
with respect to policies.
\begin{assumption}[Smooth visitation measures]
	\label{asp:visit_measure_smooth} We assume that there exists a universal
	constant $C>0$ such that $\norm[\cP_{h}^{\pi}-\cP_{h}^{\pi'}]{\infty}\le C\cdot\norm[\pi_{h}-\pi'_{h}]{\infty}$ for all $h\in[H]$ and all
	pairs of policies $\pi,\pi'$.
\end{assumption}
Assumption \ref{asp:visit_measure_smooth} states that the visitation
measures do not change drastically when similar policies
are executed. 
This notion of smoothness in visitation measures also appears in \citet{radanovic2019learning} in the
context of two-player games.
\begin{rem}
	Assumption \ref{asp:visit_measure_smooth} can in fact be relaxed
	to $\norm[\cP_{h}^{\pi}-\cP_{h}^{\pi'}]{\infty}\le C\cdot\norm[\pi_{h}-\pi'_{h}]{\infty}$
	for all $h\in[H]$ and $C=O(T^{\alpha})$ that holds for all $\alpha>0$ (i.e., the Lipschitz parameter $ C $ is sub-polynomial in $T$), and our
	algorithms and results remain the same. We choose to instead require
	$C>0$ to be a universal constant for clear exposition. 
\end{rem}

Next, we introduce several measures of changes in MDPs and
algorithms. Define 
\begin{equation}
\polchange\coloneqq\sum_{k\in[K]}\sum_{h\in[H]}\norm[\pi_{h}^{*,k}-\pi{}_{h}^{*,k-1}]{\infty},\label{eq:opt_policy_max_var}
\end{equation}
where we set $\pi_{h}^{*,0}=\pi_{h}^{*,1}$ for $h\in[H]$. Note
that $\polchange$ measures the total variation in the optimal policies
of adjacent episodes. Oftentimes, algorithms are designed to estimate
the optimal policies
$\{\pi^{*,k}\}_{k\in[K]}$ 
by estimating action-value functions
$\{Q^{\pi^{*,k},k}\}_{k\in[K]}$
via iterates
\{$Q^{k}\}_{k\in[K]}$. 
For such algorithms,
we define 
\begin{equation}
\Qchange\coloneqq\sum_{k\in[K]}\sum_{h\in[H]}\max_{s\in\cS}\norm[Q_{h}^{k}(s,\cdot)-Q_{h}^{k-1}(s,\cdot)]{\infty}^{2},\label{eq:Q_change}
\end{equation}
where we set $Q_{h}^{0}=Q_{h}^{1}$ for $h\in[H]$.
Therefore, the quantity $\Qchange$ computes total variation in algorithmic iterates $\{Q^{k}\}$. 
The notions of $\polchange$ and $\Qchange$
are also used in the work of \citet{besbes2015non,hall2013dynamical,hall2015online,rakhlin2013optimization,zinkevich2003online} and are known as \textit{variation budgets} or \textit{path lengths}.
We assume that we have access to quantities $\polchange$ and $\Qchange$
or their upper bounds via an oracle, but we do not know $\{\pi^{*,k}\}$. 
Such assumptions are standard in non-stationary RL and online convex optimization
\citep{rakhlin2012online,rakhlin2013optimization,besbes2015non,jaksch2010near,gajane2018sliding,ortner2019variational}. 

\subsection{Connections with popular RL paradigms} \label{sec:applications}
We briefly discuss how the setting introduced in Section \ref{sec:MDP} is related to several popular paradigms of RL. In certain settings of continual and meta RL, an agent needs to solve tasks one after another in the same physical environment and receives  rewards for each task commensurate to the agent's performance in solving the task. A task can therefore be seen as an episode in our episodic setting. Since the tasks are presented and solved within the same physical environment, it is sufficient to assume a fixed transition model as we do in Section \ref{sec:MDP}. On the other hand, the tasks to be solved by the agent can be substantially different from each other in reward mechanism, as such detail of each task is potentially determined by the agent's performance in all previous tasks. This suggests that the rewards of the tasks are possibly non-stationary, corresponding to the quantities $\{r^{k}_{h}\}$ in our setting.

Our setting can also be viewed as a high-level abstraction for human-machine interaction. As in the example discussed in Section \ref{sec:intro}, a human guides a robot (the learner) to accomplish certain tasks by only presenting rewards according to the performance of the robot. Here, we can think of the period in between two presented rewards as an episode in our setting. We may also set the physical state of the robot as the state of our model, thus implying a fixed state transition from the robot's perspective. Moreover, the rewards are controlled by the human in a way that possibly depends on time and  history of the robot's performance, which corresponds to our assumption on $\{r^{k}_{h}\}$.



\section{Algorithms}
In this section, we present two efficient and model-free  algorithms: \textbf{P}olicy \textbf{O}ptimization \textbf{W}ith P\textbf{E}riodic \textbf{R}estart (POWER) and its enhanced version, POWER++. Let us introduce some additional notations before proceeding. We set $d=\left|\cS\right|\left|\cA\right|$,
and let $\fmap(s,a)$ be the canonical basis of $\real^{d}$ corresponding to the state-action pair $(s,a)\in\cS\times\cA$: that is, the $(s',a')$-th entry of $\fmap(s,a)$  equals to 1 if $(s,a) = (s',a')$ and 0 otherwise.

\subsection{POWER}

		
		
		
		
		
		
		
		
		
		
		
	

We present our first algorithm, POWER, in Algorithm \ref{alg:dyn_oppo}. 
Algorithm \ref{alg:dyn_oppo} is inspired by the work of \citet{cai2019provably,efroni2020optimistic}.
It mainly consists of a policy update and a policy evaluation step.
The policy update step in Line \ref{line:dyn_oppo_update_policy}
is equivalent to solving the following optimization problem:
\begin{equation}
\pi^{k}=\argmax_{\pi\in\Delta(\cA\ |\ \cS,H)}L_{k-1}(\pi)-\frac{1}{\alpha}\E_{\pi^{k-1}}\left[\sum_{h\in[H]}\kl(\pi_{h}(\cdot\ |\ s_{h})\|\pi_{h}^{k-1}(\cdot\ |\ s_{h}))\ \Bigg|\ s_{1}=s_{1}^{k}\right],\label{eq:policy_update_optimize}
\end{equation}
where 
\begin{align*}
L_{k-1}(\pi) & \coloneqq V_{1}^{\pi^{k-1},k-1}(s_{1}^{k})\\
& \quad+\E_{\pi^{k-1}}\left[\sum_{h\in[H]}\left\langle Q_{h}^{\pi^{k-1},k-1}(s_{h},\cdot),\pi_{h}(\cdot\ |\ s_{h})-\pi_{h}^{k-1}(\cdot\ |\ s_{h})\right\rangle \ \Bigg|\ s_{1}=s_{1}^{k}\right]
\end{align*}
is a local linear approximation of $V_{1}^{\pi,k-1}(s_{1}^{k})$ at
$\pi = \pi^{k-1}$. In view of Equation \eqref{eq:policy_update_optimize},
we observe that 
the policy update step can be seen as a mirror descent
(MD) step with KL divergence as the Bregman divergence. The policy
evaluation step in Line \ref{line:dyn_oppo_policy_eval} estimates
value functions of each step. To that end, it invokes  a subroutine, EvaluatePolicy, which computes the intermediate estimates 
$w_{h}^{k}$ as the solution of the following regularized
least-squares problem 
\[
w_{h}^{k}\leftarrow\argmin_{w\in\real^{d}}\sum_{t\in[k-1]}(V_{h+1}^{k}(s_{h+1}^{t})-\fmap(s_{h}^{t},a_{h}^{t})^{\top}w)^{2}+\lambda\cdot\norm[w]2^{2}.
\]
This step can be efficiently computed by taking the sample mean of $\{V_{h+1}^{k}(s_{h+1}^{t})\}_{t\in[k-1]}$. In fact, one has 
\[
w_{h}^{k}(s,a) = \fmap(s,a)^\top w_{h}^{k} = \sum_{s'\in\cS} \frac{N_{h}^{k}(s,a,s')}{N_{h}^{k}(s,a)+\lambda}\cdot V_{h+1}^{k}(s'),
\]
for each $(s,a)$, where the function $N_{h}^{k}$ counts the number of times each tuple $(s,a,s')$ or $(s,a)$ has been visited by the algorithm at step $h$ prior to episode $k$.
To facilitate exploration in the face of uncertainties, EvaluatePolicy additionally defines a bonus term $\Gamma^{k}_{h}(s,a)\propto [N^{k}_{h}(s,a)]^{-1/2}$ for each state-action pair $(s,a)$.
The estimated action-value function is then set as $Q_{h}^{k}=r^{k}_{h} + w^{k}_{h} + \Gamma^{k}_{h} $.
We provide the detailed implementation of the subroutine EvaluatePolicy in Algorithm \ref{alg:eval_pol} in Appendices.

In addition to updating and evaluating policy, Algorithm \ref{alg:dyn_oppo}
features a periodic restart mechanism, which resets its policy estimate
every $\plen$ episodes. Restart mechanisms have been used to handle
non-stationarity in RL \citep{jaksch2010near,ortner2019variational} and related problems including bandits \citep{besbes2014stochastic},
online convex optimization \citep{besbes2015non,jadbabaie2015online}
and games \citep{duvocelle2018learning,radanovic2019learning}. Intuitively,
by employing the restart mechanism, Algorithm \ref{alg:dyn_oppo}
is able to stabilize its iterates against  non-stationary drift in the learning process 
due to adversarial reward functions.
We remark that our Algorithm \ref{alg:dyn_oppo} is very different from those used in the existing non-stationary RL literature. Notably, Algorithm \ref{alg:dyn_oppo} is model-free, which is more efficient than the model-based algorithms proposed in e.g., \cite{jaksch2010near, ortner2019variational, gajane2018sliding, cheung2019learning, lykouris2019corruption}, with respect to both time and space complexities.

\begin{algorithm}[t]
	\begin{algorithmic}[1]
		
		\Require Confidence level $\delta$, number of episodes $K$, restart cycle length $\plen$, regularization factor $\lambda$ and bonus multiplier $\beta$
		
		
		\For{episode $k=1,\ldots,K$}
		
		\State Receive the initial state $s_{1}^{k}$
		
		\If{ $k\mod\plen=1$ }\Comment{periodic restart}
		
		\State Set $\{Q_{h}^{k-1}\}_{h\in[H]}$ as zero functions and $\{\pi_{h}^{k-1}\}_{h\in[H]}$
		as uniform distributions on $\cA$ \label{line:dyn_oppo_restart}
		
		\EndIf
		
		\For{ step $h=1,2,\ldots,H$} \Comment{policy update}
		
		\State Update the policy by $\pi_{h}^{k}(\cdot\ |\ \cdot)\propto\pi_{h}^{k-1}(\cdot\ |\ \cdot)\cdot\exp\{\alpha\cdot Q_{h}^{k-1}(\cdot,\cdot)\}$
		\label{line:dyn_oppo_update_policy}
		
		\State Take action $a_{h}^{k}\sim\pi_{h}^{k}(\cdot\ |\ s_{h}^{k})$
		
		\State Observe the reward function $r_{h}^{k}(\cdot,\cdot)$ and
		receive the next state $s_{h+1}^{k}$
		
		\EndFor
		
		\State Compute $\{Q_{h}^{k}\}$ by $\text{EvaluatePolicy}(k,\{r_{h}^{k}\},\{\pi_{h}^{k}\}, \lambda, \beta)$
		\Comment{policy evaluation} \label{line:dyn_oppo_policy_eval}
		
		\EndFor
		
	\end{algorithmic}
	
	\caption{
	POWER \label{alg:dyn_oppo}}
\end{algorithm}

\subsection{POWER++}

Instead of only passively tackling non-stationarity, we may enhance our algorithms
with active prediction of the environment. Optimistic mirror descent (OMD) provides 
exactly such prediction functionality via the so-called predictable sequences. It is well-known in the online learning literature that 
OMD provides improved regret guarantees than
MD algorithm \citep{rakhlin2012online,rakhlin2013optimization}. First proposed by \citealt{nemirovski2004prox} under the
name ``mirror-prox'', OMD maintains a sequence of main and intermediate iterates. 
Through the predictable
sequences in intermediate iterates, it exploits certain structures of the problem at hand, and
therefore achieve better theoretical guarantees.
We incorporate predictable sequences into POWER and arrive at an enhanced algorithm, POWER++, which is presented in Algorithm \ref{alg:acc_oppo}. 

\begin{algorithm}[t]
	\begin{algorithmic}[1]
		
		\Require Confidence level $\delta$, number of episodes $K$, restart cycle length $\plen$, regularization factor $\lambda$ and bonus multiplier $\beta$
		
		
		\State Set $\{r_{h}^{0}\}_{h\in[H]}$ as zero functions
		
		\For{episode $k=1,\ldots,K$}
		
		\State Receive the initial state $s_{1}^{k}$
		
		\If{ $k\mod\plen=1$ }\Comment{periodic restart}
		
		\State Set $\{Q_{h}^{k-1}\}_{h\in[H]}$ as zero functions and $\{\pi_{h}^{k-1}\}_{h\in[H]}$
		as uniform distributions on $\cA$
		
		\EndIf
		
		\For{ step $h=1,2,\ldots,H$} \Comment{intermediate policy update}
		
		\State Update the policy by $\pi_{h}^{k-1/2}(\cdot\ |\ \cdot)\propto\pi_{h}^{k-1}(\cdot\ |\ \cdot)\cdot\exp\{\alpha\cdot Q_{h}^{k-1}(\cdot,\cdot)\}$
		\label{line:acc_oppo_update_policy_mirror}
		
		\EndFor
		
		\State Compute $\{Q_{h}^{k-1/2}\}$ by $\text{EvaluatePolicy}(k,\{r_{h}^{k-1}\},\{\pi_{h}^{k-1/2}\}, \lambda, \beta)$
		\label{line:acc_oppo_policy_eval_mirror}
		
		\Comment{intermediate policy evaluation} 
		
		\For{ step $h=1,2,\ldots,H$} \Comment{main policy update}
		
		\State Update the policy by $\pi_{h}^{k}(\cdot\ |\ \cdot)\propto\pi_{h}^{k-1}(\cdot\ |\ \cdot)\cdot\exp\{\alpha\cdot Q_{h}^{k-1/2}(\cdot,\cdot)\}$
		\label{line:acc_oppo_update_policy}
		
		\State Take action $a_{h}^{k}\sim\pi_{h}^{k}(\cdot\ |\ s_{h}^{k})$
		
		\State Observe the reward function $r_{h}^{k}(\cdot,\cdot)$ and
		receive the next state $s_{h+1}^{k}$
		
		\EndFor
		
		\State Compute $\{Q_{h}^{k}\}$ by $\text{EvaluatePolicy}(k,\{r_{h}^{k}\},\{\pi_{h}^{k}\}, \lambda, \beta)$
		\Comment{main policy evaluation} \label{line:acc_oppo_policy_eval}
		
		\EndFor
		
	\end{algorithmic}
	
	\caption{
	POWER++
	\label{alg:acc_oppo}}
\end{algorithm}

In Algorithm \ref{alg:acc_oppo}, Lines \ref{line:acc_oppo_update_policy_mirror}
and \ref{line:acc_oppo_update_policy} together form the OMD steps.
Line \ref{line:acc_oppo_policy_eval_mirror} estimates the intermediate
action-value function $Q_{h}^{k-1/2}$ to be used in the second OMD
step (Line \ref{line:acc_oppo_update_policy}). The series of iterates
$\{Q_{h}^{k-1}\}$ in Line \ref{line:acc_oppo_update_policy_mirror}
is the so-called predictable sequence in OMD. Note that we do not
execute the intermediate policy $\pi^{k-1/2}$ in the first (and intermediate) OMD step
(Line \ref{line:acc_oppo_update_policy_mirror}), which is only used to compute the intermediate value
estimates $\{V_{h}^{k-1/2}\}$. Rather, we execute the policy $\pi^{k}$
updated by the second (and main) OMD step.  
Finally, we remark that both Algorithms \ref{alg:dyn_oppo} and \ref{alg:acc_oppo}
have polynomial space and time complexities in $S$, $A$ and $T$.

\section{Main results }

To help with the presentation of our main results, we define the thresholding operator $\Pi_{[a,b]}(x)\coloneqq\max\{\min\{x,b\},a\}$
and we adopt the convention that $x/0=\infty$ for $ x \in \real $. We also define $\pnum\coloneqq\left\lceil \frac{K}{\plen}\right\rceil $ to be the number of restarts that take place in Algorithm \ref{alg:dyn_oppo} or \ref{alg:acc_oppo}. The following theorem
gives an upper bound for the dynamic regret incurred by Algorithm
\ref{alg:dyn_oppo}. 
\begin{thm}[Upper bound for Algorithm \ref{alg:dyn_oppo}]
	\label{thm:regret_dyn_oppo} \emph{} Under Assumption \ref{asp:visit_measure_smooth},
	for any $\delta\in(0,1]$, with probability at least $1-\delta$ and the choice of $\lambda=1$,
	$\alpha=\sqrt{\frac{L\log A}{KH^{2}}}$, $\plen=\Pi_{[1,K]}\left(\left\lfloor \left(\frac{T\sqrt{\log A}}{H\polchange}\right)^{2/3}\right\rfloor \right)$
	and $\beta=C_{\beta}H\sqrt{S\log(dT/\delta)}$ (for some universal
	constant $C_{\beta}>0$) in Algorithm \ref{alg:dyn_oppo}, the dynamic
	regret of Algorithm \ref{alg:dyn_oppo} is bounded by
	\begin{align*}
	\dreg(K) & \lesssim\sqrt{H^{3}S^{2}AT\!\cdot\!\log^{2}(dT\!/\!\delta)}\!+\!\begin{cases}
	\!\sqrt{H^{3}T\log A}, & \text{if }0\le\polchange\le\sqrt{\frac{\log A}{K}},\\
	\!\left(H^{2}T\sqrt{\log A}\right)^{2/3}\polchange^{1/3}, & \text{if }\sqrt{\frac{\log A}{K}}\!\le\!\polchange\!\lesssim\! K\sqrt{\log A},\\
	\!H^{2}\polchange, & \text{if }\polchange\gtrsim K\sqrt{\log A}.
	\end{cases}
	\end{align*}
	The result also holds if we replace $\polchange$ in the above with
	its upper bound. When the upper bounds on $\dreg(K)$ exceed $T$,
	we have $\dreg(K)\le T$.
\end{thm}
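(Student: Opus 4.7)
My plan is to bound $\dreg(K)$ cycle-by-cycle, exploiting the periodic restart so that each cycle of length $\plen$ can be analyzed as a self-contained policy-optimization problem. Writing $\dreg(K)=\sum_{l=1}^{\pnum}\sum_{k\in\text{cycle }l}[V_1^{\pi^{*,k},k}(s_1^k)-V_1^{\pi^k,k}(s_1^k)]$, I would introduce for each cycle a fixed auxiliary comparator $\tilde\pi^l$ (say $\tilde\pi^l=\pi^{*,k_l}$ with $k_l$ the first episode of the cycle) and split the cycle-$l$ contribution as
\begin{equation*}
\sum_{k\in\text{cycle }l}\!\bigl[V_1^{\pi^{*,k},k}-V_1^{\tilde\pi^l,k}\bigr](s_1^k)\;+\;\sum_{k\in\text{cycle }l}\!\bigl[V_1^{\tilde\pi^l,k}-V_1^{\pi^k,k}\bigr](s_1^k).
\end{equation*}
The first sum is a \emph{drift} penalty that measures how far the per-episode optimal policies wander inside the cycle, and the second is a static regret against the fixed policy $\tilde\pi^l$, which POWER is directly designed to control.

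For the cycle-wise static regret I plan to use the standard optimism-plus-mirror-descent argument. First, with $\beta=C_{\beta}H\sqrt{S\log(dT/\delta)}$ and a uniform concentration argument over the value iterates, the bonus $\Gamma_{h}^{k}\propto\beta/\sqrt{N_{h}^{k}(s,a)}$ should be large enough to ensure $Q_{h}^{\tilde\pi^{l},k}\le Q_{h}^{k}$ simultaneously for every cycle comparator with probability at least $1-\delta$. Optimism then reduces the cycle-wise static regret to (i) a mirror-descent surrogate $\sum_{k,h}\langle Q_{h}^{k-1}(s_{h},\cdot),\tilde\pi_{h}^{l}-\pi_{h}^{k}\rangle$ in expectation under $\tilde\pi^{l}$, (ii) a cumulative bonus term $\sum_{k,h}\Gamma_{h}^{k}(s_{h}^{k},a_{h}^{k})$, and (iii) martingale differences from replacing on-policy expectations by realized trajectories. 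The KL-regularized mirror-descent guarantee, together with $\|Q_{h}^{k-1}\|_{\infty}\le H$ and the reset of $\pi^{k_{l}-1}$ to the uniform distribution, yields a per-cycle mirror-descent bound of order $\tfrac{H\log A}{\alpha}+\alpha\plen H^{3}$, optimized to $O(H^{2}\sqrt{\plen\log A})$ by the stated $\alpha$; summed over $\pnum=K/\plen$ cycles this contributes $O(HT\sqrt{\log A/\plen})$. The cumulative bonus is controlled by the pigeonhole estimate $\sum_{k,h}1/\sqrt{N_{h}^{k}(s_{h}^{k},a_{h}^{k})}\lesssim H\sqrt{SAK}$, which combined with Bellman-error propagation across the $H$-step horizon yields the exploration term $\tilde O(\sqrt{H^{3}S^{2}AT\log^{2}(dT/\delta)})$; the martingales are handled by Azuma--Hoeffding.

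For the drift I plan to invoke Assumption \ref{asp:visit_measure_smooth} in combination with the performance difference lemma. Expanding $V_{1}^{\pi^{*,k},k}-V_{1}^{\tilde\pi^{l},k}$ step-by-step, the difference is controlled by the total variation between the state-action visitation measures at each $h$, which by iterating the smoothness assumption is itself $O(\sum_{h'\le h}\norm[\pi^{*,k}_{h'}-\tilde\pi_{h'}^{l}]{\infty})$. Combining this with the telescoping estimate $\norm[\pi^{*,k}_{h}-\pi^{*,k_{l}}_{h}]{\infty}\le\sum_{j=k_{l}+1}^{k}\norm[\pi^{*,j}_{h}-\pi^{*,j-1}_{h}]{\infty}$, together with the fact that each per-episode change of the optimal policy can influence up to $\plen$ subsequent comparisons within its cycle and propagates through an $H$-horizon value function, produces a per-cycle drift of order $\plen H^{2}\cdot P_{T,\text{cycle}}$; summing over all $\pnum$ cycles yields a total drift of order $O(\plen H^{2}\polchange)$.

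Balancing the two tunable contributions $HT\sqrt{\log A/\plen}$ and $\plen H^{2}\polchange$ gives the optimal restart length $\plen^{\star}\asymp(T\sqrt{\log A}/(H\polchange))^{2/3}$, matching the theorem. Clipping $\plen$ to $[1,K]$ then produces the three regimes stated: $\plen^{\star}=K$ (no restart) recovers the near-optimal static rate $\tilde O(\sqrt{H^{3}T\log A})$ (Case 1), an interior $\plen^{\star}\in(1,K)$ yields the interpolating $(H^{2}T\sqrt{\log A})^{2/3}\polchange^{1/3}$ (Case 2), and $\plen^{\star}=1$ (restart every episode) gives the trivial $H^{2}\polchange$ (Case 3). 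I expect the main obstacles to be (a) the optimism step under adversarial rewards, which requires uniform concentration over all value iterates encountered across the entire horizon via a covering argument, and (b) converting the per-state mirror-descent inequality into an expectation under the comparator's visitation measure $d^{h}_{\tilde\pi^{l}}$, for which Assumption \ref{asp:visit_measure_smooth} is essential in bridging the trajectories of $\pi^{k}$ and $\tilde\pi^{l}$ and absorbing the mismatch into universal constants.
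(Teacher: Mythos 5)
Your proposal is correct and follows essentially the same route as the paper: a per-cycle fixed comparator (the optimal policy of the cycle's first episode), optimism enforced by the bonus with cumulative bonus of order $\tilde{O}(\sqrt{H^{3}S^{2}AT})$, KL-mirror-descent telescoping within each cycle contributing $\log A$ per restart, a drift term of order $\plen H^{2}\polchange$ controlled via Assumption \ref{asp:visit_measure_smooth}, and the same balancing of $\plen$ into the three regimes. The only cosmetic difference is that you split off the fixed comparator at the value-function level before applying the extended value-difference decomposition, whereas the paper performs the split $\E_{\pi^{*,k}}=\E_{\pi^{*,(l-1)\plen+1}}+(\E_{\pi^{*,k}}-\E_{\pi^{*,(l-1)\plen+1}})$ after it (Lemmas \ref{lem:perf_diff_bound_fixed_expect} and \ref{lem:perf_diff_bound_varying_policies}).
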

The proof is given in Appendix \ref{sec:proof_regret_dyn_oppo}.
    The regret bound in Theorem \ref{thm:regret_dyn_oppo}
    interpolates smoothly throughout three regimes of $\polchange$:
    \begin{itemize}
    	\item Small $\polchange$: when $0\le\polchange\le\sqrt{\frac{\log A}{K}}$,
    	the dynamic regret scales as $\tilde{O}(T^{1/2})$ and subsumes the static regret results in \citet{cai2019provably,efroni2020optimistic} under the full-information setting. 
    	In view of \cite{azar2017minimax}, this bound is also nearly optimal (up to polynomial factors of $H$, $S$ and $A$).
    	Therefore, our bound in Theorem \ref{thm:regret_dyn_oppo} is \textit{adaptively} near-optimal under small $\polchange$; 
    	\item Moderate $\polchange$: when $\sqrt{\frac{\log A}{K}}\le\polchange\lesssim K\sqrt{\log A}$,
    	we obtain a dynamic regret of order $\tilde{O}(T^{2/3}\polchange^{1/3})$,
    	which is $\tilde{O}(T^{2/3})$ if $\polchange=O(1)$ and sub-linear in $T$
    	if $\polchange=o(K)$. Similar $\tilde{O}(T^{2/3})$ bounds have been
    	achieved by model-based algorithms in 
    	\citet{jaksch2010near,gajane2018sliding,ortner2019variational,cheung2019reinforcement},
    	which are less efficient than our model-free algorithms in both time and space complexities;
    	\item Large $\polchange$: when $\polchange\gtrsim K\sqrt{\log A}$, the
    	model is highly non-stationary and Algorithm \ref{alg:dyn_oppo} incurs a linear
    	regret in $T$. 
    \end{itemize}

In addition, the dynamic regret bound in Theorem \ref{thm:regret_dyn_oppo}
can be seen as a combination of two parts. The first 
is the cost paid for being optimistic and due to sum of bonus terms
$\{\Gamma_{h}^{k}\}$ in Algorithm \ref{alg:eval_pol} (see Equation \eqref{eq:model_pred_err_bound}
in the proof for details).
This part is necessary to enforce optimism
in the face of uncertainty generated by the transition kernels and
is key to effective exploration. The second part is the error
caused by non-stationarity of reward functions and depends on $\polchange$.
Such decomposition is
not available in the dynamic regret analysis of online convex optimization
problems where MD/OMD-based algorithms have been widely applied. In particular, the dynamic regret bound for online optimization
lacks the term due to bonus as it does not require exploration, which
is nevertheless a key component underlying RL algorithms that provably explore.

Next we present a result for Algorithm \ref{alg:acc_oppo}.
\begin{thm}[Upper bound for Algorithm \ref{alg:acc_oppo}]
	\label{thm:regret_acc_oppo} \emph{} Under Assumption  \ref{asp:visit_measure_smooth}, for any $\delta\in(0,1]$,
	with probability at least $1-\delta$ and the choice of $\lambda=1$,
	$\alpha=\sqrt{\frac{\pnum H\log A}{\Qchange}}$, $\plen=\Pi_{[1,K]}\left(\left\lfloor \left(\frac{\sqrt{\Qchange\cdot T\log A}}{H^{2}\polchange}\right)^{2/3}\right\rfloor \right)$
	and $\beta=C_{\beta}H\sqrt{S\log(dT/\delta)}$ (for some universal
	constant $C_{\beta}>0$) in Algorithm \ref{alg:acc_oppo}, the dynamic
	regret of Algorithm \ref{alg:acc_oppo} is bounded by 
	\begin{align*}
	\dreg(K) & \!\lesssim\!\!\sqrt{\!H^{3}S^{2}AT\!\cdot\!\log^{2}(dT/\delta)}\!+\!\begin{cases}
	\!\sqrt{\Qchange\cdot H\log A}, & \text{if }0\le\polchange\le\sqrt{\frac{\Qchange\cdot\log A}{K^{2}H^{3}}},\\
	\!\left(H\!\sqrt{\Qchange\!\cdot\! T\log\! A}\right)^{2/3}\!\polchange^{1/3}, & \text{if }\sqrt{\frac{\Qchange\cdot\log\! A}{K^{2}H^{3}}}\!\le\!\polchange\!\lesssim\!\frac{\sqrt{\Qchange\cdot T\log\! A}}{H^{2}},\\
	\!H^{2}\polchange, & \text{if }\polchange\gtrsim\frac{\sqrt{\Qchange\cdot T\log A}}{H^{2}}.
	\end{cases}
	\end{align*}
	The result also holds if we replace $\polchange$ and $\Qchange$
	in the above with their upper bounds. When the upper bounds on $\dreg(K)$
	exceed $T$, we have $\dreg(K)\le T$.
\end{thm}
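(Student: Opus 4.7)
The proof will parallel the argument used for Theorem \ref{thm:regret_dyn_oppo}, but with the mirror descent analysis replaced by an optimistic mirror descent (OMD) analysis exploiting the predictable sequence $\{Q_h^{k-1}\}$ used in Line \ref{line:acc_oppo_update_policy_mirror} of Algorithm \ref{alg:acc_oppo}. The first step is the standard value-difference decomposition in the style of \citet{cai2019provably}: for each episode $k$ I would write
\[
V_1^{\pi^{*,k},k}(s_1^k) - V_1^{\pi^k,k}(s_1^k) = \underbrace{\sum_{h=1}^H \E_{\pi^{*,k}}\!\left[\langle Q_h^k(s_h,\cdot),\, \pi_h^{*,k}(\cdot|s_h)-\pi_h^k(\cdot|s_h)\rangle\right]}_{\text{OMD term}} + \text{(model prediction error)} + \text{(martingale)},
\]
where $Q_h^k$ is the estimate produced by \textsc{EvaluatePolicy}. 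The model prediction error and martingale pieces are controlled exactly as in Theorem \ref{thm:regret_dyn_oppo}: by optimism with $\beta = C_\beta H\sqrt{S\log(dT/\delta)}$, the bonus $\Gamma_h^k$ dominates the transition estimation error with high probability, and summing the bonuses via a standard elliptical-potential/pigeonhole argument yields the $\sqrt{H^3S^2AT\cdot\log^2(dT/\delta)}$ exploration term; Azuma--Hoeffding handles the martingale at cost $\tilde O(\sqrt{H^2T})$.

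The novel part is the OMD bound on the online-learning term. Lines \ref{line:acc_oppo_update_policy_mirror} and \ref{line:acc_oppo_update_policy} form a mirror-prox update with KL regularizer, gradient $Q_h^{k-1/2}$ at the intermediate point and $Q_h^{k-1}$ as the predictor. Within a single restart block of length $\tau$, applying the standard optimistic-MD lemma (e.g.\ \citealt{rakhlin2013optimization}) against the dynamic comparator $\{\pi_h^{*,k}\}$ at each step $h$ gives, for each $h$,
\[
\sum_{k \in \text{block}} \langle Q_h^{k-1/2},\, \pi_h^{*,k}-\pi_h^k\rangle \;\lesssim\; \frac{\log A + \sum_{k\in\text{block}} \kl(\pi_h^{*,k}\|\pi_h^{*,k-1})}{\alpha} \;+\; \alpha \sum_{k\in\text{block}} \|Q_h^{k-1/2}-Q_h^{k-1}\|_\infty^2 .
\]
Summing over $h\in[H]$ and over all $L$ restart blocks yields $\tfrac{LH\log A + H\polchange}{\alpha} + \alpha \Qchange$ after using $\kl(\pi\|\pi')\lesssim \|\pi-\pi'\|_\infty$ and controlling $\|Q_h^{k-1/2}-Q_h^{k-1}\|_\infty^2$ by (a constant multiple of) the consecutive variation of the main iterates $\|Q_h^{k}-Q_h^{k-1}\|_\infty^2$, so that the last term is $\lesssim \alpha \Qchange$. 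Replacing $Q_h^{k-1/2}$ by $Q_h^k$ in the comparator inner product costs an extra term that is absorbed in the bonus / variation bookkeeping above.

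To turn the online-learning bound into a regret bound on the true value gap, I need to move the expectation from $\pi^{*,k}$ to the executed $\pi^k$. This is where Assumption \ref{asp:visit_measure_smooth} enters: the discrepancy $\E_{\pi^{*,k}}[\,\cdot\,]-\E_{\pi^k}[\,\cdot\,]$ is bounded by $C \|\pi^{*,k}_h-\pi^k_h\|_\infty$ via smooth visitation measures, so the total additional cost is $\lesssim H^2 \polchange$. Combining all pieces gives an ``adaptation'' bound
\[
\tfrac{LH\log A + H\polchange}{\alpha} + \alpha \Qchange + H^2 \polchange ,
\]
to which the exploration cost $\sqrt{H^3S^2AT\log^2(dT/\delta)}$ is added.

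The final step is parameter tuning. With $L=\lceil K/\plen\rceil$, the choice $\alpha=\sqrt{LH\log A/\Qchange}$ balances the first two terms, producing $\sqrt{LH\log A\cdot \Qchange}=\sqrt{KH\log A\cdot \Qchange/\plen}$, and then optimizing $\plen$ against $H^2\polchange$ (modulo the $\polchange/\alpha$ contribution) yields the prescribed $\plen$ and the three-regime piecewise bound by tracing through when (i) $\plen=K$ is optimal, (ii) the interior optimum lies in $[1,K]$, and (iii) $\plen=1$ is forced. I expect the main obstacle to be step two: getting a clean OMD bound whose variation term is exactly $\Qchange$ rather than the harder-to-control $\sum_k\|Q_h^{k-1/2}-Q_h^{k-1}\|_\infty^2$, and simultaneously handling the dynamic comparator path length $\polchange$ together with the periodic restarts so that each restart contributes only the small $\tfrac{\log A}{\alpha}$ overhead per step.
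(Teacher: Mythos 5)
Your overall architecture coincides with the paper's: the same value-difference decomposition into an online-learning term, a model-prediction-error term, and a martingale; the same optimism/elliptical-potential treatment of the bonuses and Azuma--Hoeffding for the martingale; an optimistic mirror descent bound whose variation term is $\Qchange$; Assumption \ref{asp:visit_measure_smooth} to absorb the episode-dependent comparator measures at cost $O(\plen H^{2}\polchange)$; and the same tuning of $\alpha$ and $\plen$. One of the obstacles you flag at the end is already resolved by the lemma the paper invokes: its OMD inequality (Lemma \ref{lem:fast_convg_prop_5}, adapted from Syrgkanis et al., Proposition 5) is stated directly with the variation of the \emph{main} iterates $\sum_{k}\norm[Q_{h}^{k}(s,\cdot)-Q_{h}^{k-1}(s,\cdot)]{\infty}^{2}$, which sums to exactly $\Qchange$, so no separate passage from $Q_{h}^{k-1/2}$ to $Q_{h}^{k}$ is required.

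There is, however, a genuine gap in your treatment of the moving comparator. You run OMD against the dynamic sequence $\{\pi_{h}^{*,k}\}$ and pay a path-length penalty $\sum_{k}\kl(\pi_{h}^{*,k}\,\|\,\pi_{h}^{*,k-1})/\alpha$, which you then convert to $\polchange/\alpha$ via the claim $\kl(\pi\|\pi')\lesssim\norm[\pi-\pi']{\infty}$. That inequality is false: KL divergence is not controlled by any $\ell_{1}$ or $\ell_{\infty}$ distance without a lower bound on the minimum probability mass, and it is $+\infty$ whenever $\pi$ places mass where $\pi'$ does not --- precisely the situation for the optimal policies $\pi^{*,k}$, which are generically deterministic and may switch actions between adjacent episodes. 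The paper avoids KL of comparators entirely (Lemma \ref{lem:perf_diff_bound_fixed_state_acc_oppo}): within each restart block it applies the OMD lemma against the \emph{fixed} comparator $\nu_{h}^{l}=\pi_{h}^{*,(l-1)\plen+1}$, paying only $\log A/\alpha$ per block per step (total $\pnum H\log A/\alpha$), and separately bounds the residual $\sum_{k}\langle Q_{h}^{k},\pi_{h}^{*,k}-\nu_{h}^{l}\rangle\le H\sum_{k}\norm[\pi_{h}^{*,k}-\nu_{h}^{l}]1$ by H\"{o}lder's inequality and telescoping, which yields $\plen H\polchange$. A smaller inaccuracy: the expectation is moved not from $\E_{\pi^{*,k}}$ to the executed $\E_{\pi^{k}}$ but to the block-initial $\E_{\pi^{*,(l-1)\plen+1}}$ (Lemma \ref{lem:perf_diff_bound_varying_policies}); the cost is still $O(\plen H^{2}\polchange)$ under Assumption \ref{asp:visit_measure_smooth}, so the final bound is unaffected, but both the comparator drift and the measure drift must be charged to $\polchange$ with a factor of $\plen$, which is what makes the restart length the correct tuning knob.
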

The proof is given in Appendix \ref{sec:proof_regret_acc_oppo}. A few remarks about Theorem \ref{thm:regret_acc_oppo} are in order.
	Similar to Theorem \ref{thm:regret_dyn_oppo}, the result in Theorem
	\ref{thm:regret_acc_oppo} interpolates across three regimes
	depending on the magnitude of $\polchange$, and decomposes into two terms respectively arising from the uncertainties of 
	transition kernels and non-stationarity of reward functions. 
	Moreover, thanks to
	the OMD steps in Algorithm \ref{alg:acc_oppo} that actively make
	predictions via predictable sequence $\{Q_{h}^{k-1}\}$, 
	the bound in Theorem \ref{thm:regret_acc_oppo} is strictly better than that in Theorem \ref{thm:regret_dyn_oppo} in view of the fact that $\Qchange \lesssim KH^{3}$. 
	When $\polchange$
	is moderate, i.e., $\sqrt{\frac{\Qchange\cdot\log A}{K^{2}H^{3}}}\le\polchange\lesssim\frac{\sqrt{\Qchange\cdot T\log A}}{H^{2}}$,
	the dynamic regret bound in Theorem \ref{thm:regret_acc_oppo} is
	of order $\tilde{O}(T^{1/3}\Qchange^{1/3}\polchange^{1/3})$, which
	is similar to the result of \citet[Theorem 3]{jadbabaie2015online}
	obtained for online optimization problems. Regret bounds that depend on $\Qchange$,
	the variation of predictable sequences, have also appeared in \citet{rakhlin2012online,rakhlin2013optimization},
	although for static regret and online optimization problems.
%



\paragraph{Technical highlights.}

A central step of our dynamic regret analysis is to control the expected performance
difference between the estimated policies $\{\pi^{k}\}$ and the optimal $\{\pi^{*,k}\}$, defined
as 
\[
\sum_{k\in[K]}\sum_{h\in[H]}\E_{\pi^{*,k}}\left[\left\langle Q_{h}^{k}(s_{h},\cdot),\pi_{h}^{*,k}(\cdot\ |\ s_{h})-\pi_{h}^{k}(\cdot\ |\ s_{h})\right\rangle \ \Bigg|\ s_{1}=s_{1}^{k}\right].
\]
Note the the expectation is taken over $\{\pi^{*,k}\}$ which may vary over episodes $k$.
For static regret, i.e., when $\pi^{*,k}\equiv\pi^{*}$ for $k\in[K]$,
we may control the above term by a standard telescoping argument, which is not viable for dynamic regret analysis.
Instead, we decompose the above expectation  into $\E_{\pi^{*,k}}[\cdot]=\E_{\pi^{*,k_{0}}}[\cdot] + \E_{\pi^{*,k}-\pi^{*,k_{0}}}[\cdot]$.
Here, $k_{0}<k$ is the episode in which restart takes place most
recently prior to episode $k$. 
The first expectation $\E_{\pi^{*,k_{0}}}[\cdot]$ is taken over $\pi^{*,k_{0}}$, which stays constant for the period from $k_0$ to the next restart. Therefore, we may apply a customized telescoping argument to each period between restarts. 
The second expectation $\E_{\pi^{*,k}-\pi^{*,k_{0}}}[\cdot]$ from the decomposition involves the difference $\pi^{*,k}-\pi^{*,k_{0}}$ and can be bounded by $\polchange$. 
See Lemmas \ref{lem:perf_diff_bound_fixed_expect} and \ref{lem:perf_diff_bound_varying_policies} in Appendices, respectively, 
for details of controlling the two expectations. 
Furthermore, it is noteworthy that the
restart cycle length $\plen$ plays an important role of balancing the tradeoffs
that $1)$ the optimal policies between two adjacent restarts are relatively
stationary among themselves so that the algorithm is compared to stable
benchmarks, and that $2)$ there are not too many restarts so that the
sub-optimality of algorithm do not grow too fast when combined over periods
in between restarts.

\paragraph{Comparison with existing results.}

We compare the results in Theorems \ref{thm:regret_dyn_oppo} and \ref{alg:acc_oppo} to those in \cite{cheung2019reinforcement}, which is so far state-of-the-art in dynamic regret analysis for non-stationary RL.
First, our model-free algorithms are more efficient than the model-based algorithm in \cite{cheung2019reinforcement} that is adapted from UCRL2 and requires solving linear programs in each timestep. 
Second, our bounds in Theorems \ref{thm:regret_dyn_oppo} and \ref{thm:regret_acc_oppo} are on the near-optimal order $\tilde{O}(T^{1/2})$ when $\polchange$ is sufficiently small, whereas the results in \citet{cheung2019reinforcement} are of order $\tilde{O}(T^{2/3})$.
On the other hand, \cite{cheung2019reinforcement} studies a more general setting where the transition kernel of the MDP is allowed to vary adversarially in each timestep. It also provides a procedure to adaptively tune its UCRL2-based algorithm to achieve an $\tilde{O}(T^{3/4})$ regret without knowledge of variations such as $\polchange$.

\section*{Acknowledgement}

This work is supported in part by  National Science Foundation Grant CCF-1704828.

\nocite{}
\bibliographystyle{plainnat}
\bibliography{references}

\newpage

\appendix
\appendixpage

\section{Implementation of EvaluatePolicy} \label{sec:implem_evalpolicy}
\begin{algorithm}
	\begin{algorithmic}[1]
		
		\Require Episode index $k$, reward functions $\{r_{h}\}$, policies
		$\{\pi_{h}\}$, regularization factor $\lambda$ and bonus multiplier $\beta$
		
		\Ensure Updated Q-values $\{Q_{h}\}$
		
		\State Initialize $V_{H+1}$ as a zero function
		
		\For{ step $h=H,H-1,\ldots,1$}
		
		\State $\Lambda_{h}\leftarrow\sum_{t\in[k-1]}\fmap(s_{h}^{t},a_{h}^{t})\fmap(s_{h}^{t},a_{h}^{t})^{\top}+\lambda\cdot\IdMat$
		\label{line:ep_cov_update}
		
		\State $w_{h}\leftarrow(\Lambda_{h})^{-1}\sum_{t\in[k-1]}\fmap(s_{h}^{t},a_{h}^{t})\cdot V_{h+1}(s_{h+1}^{t})$\label{line:ep_dyn_oppo_w_def}
		
		\State $\Gamma_{h}(\cdot,\cdot)\leftarrow\beta\cdot[\fmap(\cdot,\cdot)^{\top}(\Lambda_{h})^{-1}\fmap(\cdot,\cdot)]^{1/2}$
		\label{line:ep_bonus_def}
		
		\State $Q_{h}(\cdot,\cdot)\leftarrow r_{h}(\cdot,\cdot)+\min\{\fmap(\cdot,\cdot)^{\top}w_{h}+\Gamma_{h}(\cdot,\cdot),H-h\}^{+}$
		\label{line:ep_Q_def}
		
		\State $V_{h}(\cdot)\leftarrow\left\langle Q_{h}(\cdot,\cdot),\pi_{h}(\cdot\ |\ \cdot)\right\rangle _{\cA}$
		
		\EndFor
		
	\end{algorithmic}
	
	\caption{EvaluatePolicy \label{alg:eval_pol}}
\end{algorithm}

In Algorithm \ref{alg:eval_pol}, the tuples $\{(s^{t}_{h}, a^{t}_{h})\}_{t\in[k-1]}$ are state-action pairs visited by Algorithm \ref{alg:dyn_oppo} or \ref{alg:acc_oppo} before episode $k$.

\section{Proofs of technical lemmas}

Recall that $\pnum\coloneqq\left\lceil \frac{K}{\plen}\right\rceil $.  Algorithm \ref{alg:dyn_oppo} divides $K$ episodes into
$\pnum$ periods, and at the the beginning of each period it resets its Q-value and policy estimates. Each period contains $\plen$ episodes, except for
the last one, which consists of at most $\plen$ episodes. For ease
of notations, we assume that the last period has exactly $\plen$
episodes. Our proof can be easily extended to the case where the last
period has fewer than $\plen$ episodes.

\subsection{Regret decomposition}

For any $(k,h,s)\in[K]\times[H]\times\cS$, we define the model prediction
error 
\begin{equation}
\iota_{h}^{k}\coloneqq r_{h}^{k}+\P_{h}V_{h+1}^{k}-Q_{h}^{k}.\label{eq:model_pred_err}
\end{equation}
We have the following decomposition of the dynamic regret \eqref{eq:dynamic_regret}.
\begin{lem}
	\emph{\label{lem:dyn_reg_decomp}  We have }
	\begin{align*}
	\dreg(K) & =\sum_{l\in[\pnum]}\sum_{k=(l-1)\plen+1}^{l\plen}\sum_{h\in[H]}\E_{\pi^{*,k}}\left[\left\langle Q_{h}^{k}(s_{h},\cdot),\pi_{h}^{*,k}(\cdot\ |\ s_{h})-\pi_{h}^{k}(\cdot\ |\ s_{h})\right\rangle \ \Bigg|\ s_{1}=s_{1}^{k}\right]\\
	& \quad+\sum_{l\in[\pnum]}\sum_{k=(l-1)\plen+1}^{l\plen}\sum_{h\in[H]}\left[\E_{\pi^{*,k}}[\iota_{h}^{k}(s_{h},a_{h})\ |\ s_{1}=s_{1}^{k}]-\iota_{h}^{k}(s_{h}^{k},a_{h}^{k})\right]+M_{K,H},
	\end{align*}
	where $M_{K,H}\coloneqq\sum_{k\in[K]}\sum_{h\in[H]}M_{h}^{k}$ is
	a martingale that satisfies $\left|M_{h}^{k}\right|\le4H$ for $(k,h)\in[K]\times[H]$.
\end{lem}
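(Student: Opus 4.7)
The plan is to derive the identity episode by episode and then aggregate over $k$; the grouping into $\pnum$ periods of length $\plen$ dictated by the restarts is purely cosmetic (and is kept only to match downstream notation), since restart only resets the quantities inside $\iota_{h}^{k}$ but does not alter the decomposition itself.

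Fix $k$ and split
\[
V_{1}^{\pi^{*,k},k}(s_{1}^{k}) - V_{1}^{\pi^{k},k}(s_{1}^{k}) = \bigl[V_{1}^{\pi^{*,k},k}(s_{1}^{k}) - V_{1}^{k}(s_{1}^{k})\bigr] + \bigl[V_{1}^{k}(s_{1}^{k}) - V_{1}^{\pi^{k},k}(s_{1}^{k})\bigr],
\]
where $V_{h}^{k}(s) := \langle Q_{h}^{k}(s,\cdot), \pi_{h}^{k}(\cdot\mid s)\rangle$ is the algorithm's surrogate value. Combining the definition \eqref{eq:model_pred_err} of $\iota_{h}^{k}$ with the Bellman equation \eqref{eq:bellman} yields $Q_{h}^{\pi,k} - Q_{h}^{k} = \P_{h}(V_{h+1}^{\pi,k} - V_{h+1}^{k}) + \iota_{h}^{k}$ for every policy $\pi$. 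For the first bracket, add and subtract $\langle Q_{h}^{k},\pi_{h}^{*,k}\rangle$ and iterate this identity under the $\pi^{*,k}$-expectation starting from $s_{1}^{k}$, using $V_{H+1}^{\pi^{*,k},k} = V_{H+1}^{k} = 0$; for the second bracket, the analogous iteration under $\pi^{k}$-expectation is cleaner because the $\langle Q_{h}^{k}, \pi_{h}^{k} - \pi_{h}^{k}\rangle$ term vanishes. One obtains
\begin{align*}
V_{1}^{\pi^{*,k},k}(s_{1}^{k}) - V_{1}^{k}(s_{1}^{k}) &= \sum_{h\in[H]}\E_{\pi^{*,k}}\!\left[\langle Q_{h}^{k}(s_{h},\cdot),\pi_{h}^{*,k}-\pi_{h}^{k}\rangle + \iota_{h}^{k}(s_{h},a_{h}) \,\big|\, s_{1}=s_{1}^{k}\right], \\
V_{1}^{k}(s_{1}^{k}) - V_{1}^{\pi^{k},k}(s_{1}^{k}) &= -\sum_{h\in[H]}\E_{\pi^{k}}\!\left[\iota_{h}^{k}(s_{h},a_{h}) \,\big|\, s_{1}=s_{1}^{k}\right].
\end{align*}

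To convert the $\pi^{k}$-expectation into a realized trajectory value, set
\[
M_{h}^{k} := \E_{\pi^{k}}\!\left[\iota_{h}^{k}(s_{h},a_{h}) \,\big|\, s_{1}=s_{1}^{k}\right] - \iota_{h}^{k}(s_{h}^{k},a_{h}^{k}).
\]
Let $\cG_{k}$ denote the $\sigma$-algebra generated by the entire history through the end of episode $k-1$ together with the adversarially chosen $(s_{1}^{k},r^{k})$; then $Q_{h}^{k}$, $V_{h+1}^{k}$ and $\pi_{h}^{k}$ are all $\cG_{k}$-measurable, while conditional on $\cG_{k}$ the trajectory $\{(s_{h}^{k},a_{h}^{k})\}_{h\in[H]}$ is distributed exactly as $\pi^{k}$ launched from $s_{1}^{k}$. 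It follows that $\E[M_{h}^{k}\mid\cG_{k}]=0$, so $M_{K,H}:=\sum_{k,h} M_{h}^{k}$ is a martingale with respect to the episode-indexed filtration $\{\cG_{k}\}$. Combining the two per-episode identities, substituting the definition of $M_{h}^{k}$, summing over $k$ and regrouping into the $\pnum$ restart periods produces the claimed decomposition.

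The only step requiring a word of explanation is the uniform bound $|M_{h}^{k}|\le 4H$. Algorithm \ref{alg:eval_pol} clips its estimates by $\min\{\cdot,H-h\}^{+}$, forcing $Q_{h}^{k}\in[0,H-h]$ and hence $V_{h+1}^{k}\in[0,H]$; since $V_{h+1}^{\pi,k}\in[0,H]$ for any policy as well, we have $|\iota_{h}^{k}| \le r_{h}^{k} + \|\P_{h}V_{h+1}^{k}\|_{\infty} + \|Q_{h}^{k}\|_{\infty} \le 2H$, and the triangle inequality in the definition of $M_{h}^{k}$ yields $|M_{h}^{k}|\le 4H$. Nothing here is genuinely hard; the main bookkeeping concern is that $\cG_{k}$ must be defined to contain $r^{k}$ (which is adversarial and may depend on past episodes) before the episode-$k$ trajectory is sampled, so that every quantity entering $\iota_{h}^{k}$ is $\cG_{k}$-measurable. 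This is guaranteed by the reveal-then-play protocol of Section~\ref{sec:MDP}.
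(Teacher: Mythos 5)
Your proposal is correct, and the core decomposition is the same as the paper's: both split $V_{1}^{\pi^{*,k},k}-V_{1}^{\pi^{k},k}$ into $(V_{1}^{\pi^{*,k},k}-V_{1}^{k})+(V_{1}^{k}-V_{1}^{\pi^{k},k})$ and unroll the identity $Q_{h}^{\pi,k}-Q_{h}^{k}=\P_{h}(V_{h+1}^{\pi,k}-V_{h+1}^{k})+\iota_{h}^{k}$ along the respective trajectories, exactly as in the argument the paper imports from \citet[Lemma 4.2]{cai2019provably}. The one place you genuinely diverge is the construction of the martingale. The paper keeps the second bracket in realized-trajectory form and writes $M_{h}^{k}=D_{h,1}^{k}+D_{h,2}^{k}$, where $D_{h,1}^{k}$ and $D_{h,2}^{k}$ are \emph{step-level} martingale differences (one for the action draw, one for the transition draw), each bounded by $2H$; this gives a martingale with $2KH$ increments of size at most $2H$, which is precisely the structure Lemma \ref{lem:mtg_bound} uses to get the $\exp(-t^{2}/(16H^{2}T))$ tail. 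You instead collapse the whole episode into a single conditional expectation $\E_{\pi^{k}}[\iota_{h}^{k}(s_{h},a_{h})\mid s_{1}=s_{1}^{k}]$ and obtain increments that are martingale differences only with respect to the episode-indexed filtration $\{\cG_{k}\}$: within an episode the $M_{h}^{k}$ for different $h$ are not successive martingale differences, only their sum over $h$ is an increment. Your version still satisfies the literal statement of the lemma ($M_{K,H}$ is a martingale and $|M_{h}^{k}|\le 4H$), but applying Azuma--Hoeffding to it gives per-episode increments bounded by $4H^{2}$ and hence a tail of the form $\exp(-t^{2}/(32H^{3}T))$, i.e., an extra $\sqrt{H}$ in the martingale bound of Lemma \ref{lem:mtg_bound}. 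This is harmless for Theorems \ref{thm:regret_dyn_oppo} and \ref{thm:regret_acc_oppo}, where that term is dominated by $\sqrt{H^{3}S^{2}AT\log^{2}(dT/\delta)}$, but if you adopt your construction you should restate Lemma \ref{lem:mtg_bound} accordingly rather than quoting its constant. Your measurability bookkeeping (that $r^{k}$, $Q_{h}^{k}$, $\pi^{k}$ are fixed before the episode-$k$ trajectory is sampled) and the bound $|\iota_{h}^{k}|\le 2H$ are both fine.
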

We defer its proof to Section \ref{sec:proof_dyn_reg_decomp}.

\subsection{Performance difference bound}

We may further decompose the first term on the RHS of Lemma \ref{lem:dyn_reg_decomp}
as 
\begin{align}
& \quad\sum_{l\in[\pnum]}\sum_{k=(l-1)\plen+1}^{l\plen}\sum_{h\in[H]}\E_{\pi^{*,k}}\left[\left\langle Q_{h}^{k}(s_{h},\cdot),\pi_{h}^{*,k}(\cdot\ |\ s_{h})-\pi_{h}^{k}(\cdot\ |\ s_{h})\right\rangle \ \Bigg|\ s_{1}=s_{1}^{k}\right]\nonumber \\
& =\sum_{l\in[\pnum]}\sum_{k=(l-1)\plen+1}^{l\plen}\sum_{h\in[H]}\E_{\pi^{*,(l-1)\plen+1}}\left[\left\langle Q_{h}^{k}(s_{h},\cdot),\pi_{h}^{*,k}(\cdot\ |\ s_{h})-\pi_{h}^{k}(\cdot\ |\ s_{h})\right\rangle \ \Bigg|\ s_{1}=s_{1}^{k}\right]\nonumber \\
& \quad+\sum_{l\in[\pnum]}\sum_{k=(l-1)\plen+1}^{l\plen}\sum_{h\in[H]}\left(\E_{\pi^{*,k}}-\E_{\pi^{*,(l-1)\plen+1}}\right)\left[\left\langle Q_{h}^{k}(s_{h},\cdot),\pi_{h}^{*,k}(\cdot\ |\ s_{h})-\pi_{h}^{k}(\cdot\ |\ s_{h})\right\rangle \ \Bigg|\ s_{1}=s_{1}^{k}\right].\label{eq:perf_diff_decomp}
\end{align}

\subsubsection{First term in Equation \eqref{eq:perf_diff_decomp}}

We first introduce a ``one-step descent'' result.
\begin{lem}[{\citet[Lemma 3.3]{cai2019provably}}]
	\label{lem:one_step_descent}\emph{ } For any distribution $p^{*}$
	and $p$ supported on $\cA$, state $s\in\cS$, and function $Q:\cS\times\cA\to[0,H]$,
	it holds for a distribution $p'$ supported on $\cA$ with $p'(\cdot)\propto p(\cdot)\cdot\exp\{\alpha\cdot Q(s,\cdot)\}$
	that 
	\[
	\left\langle Q(s,\cdot),p^{*}(\cdot)-p(\cdot)\right\rangle \le\frac{1}{2}\alpha H^{2}+\frac{1}{\alpha}\left[\kl(p^{*}(\cdot)\ \|\ p(\cdot))-\kl(p^{*}(\cdot)\ \|\ p'(\cdot))\right].
	\]
\end{lem}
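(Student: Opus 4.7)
The plan is to exploit the closed form of the exponentiated-gradient update $p'(\cdot)\propto p(\cdot)\exp(\alpha Q(s,\cdot))$ to rewrite the inner product $\langle Q(s,\cdot),p^{*}-p\rangle$ as a telescoping combination of KL divergences plus a residual term, and then to bound the residual via Hoeffding's lemma. This is the standard mirror-descent / exponentiated-weights analysis, specialized to the simplex with the KL Bregman divergence, so no novel idea is needed.

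First I would take logs of the update rule. Writing $Z\coloneqq\sum_{a\in\cA}p(a)\exp(\alpha Q(s,a))$, the update gives $\alpha Q(s,a)=\log p'(a)-\log p(a)+\log Z$. Inner-producting both sides against $p^{*}(\cdot)-p(\cdot)$ cancels the constant $\log Z$ (since both are probability vectors on $\cA$), yielding
\[
\alpha\langle Q(s,\cdot),p^{*}(\cdot)-p(\cdot)\rangle=\sum_{a\in\cA}\bigl(p^{*}(a)-p(a)\bigr)\log\frac{p'(a)}{p(a)}.
\]
A few lines of algebra---adding and subtracting $\log p^{*}(a)$ inside the log-ratio and regrouping---recognize the right-hand side as the three-point identity for KL divergence,
\[
\alpha\langle Q(s,\cdot),p^{*}(\cdot)-p(\cdot)\rangle=\kl(p^{*}\,\|\,p)-\kl(p^{*}\,\|\,p')+\kl(p\,\|\,p').
\]

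Second, I would control the residual $\kl(p\,\|\,p')$. Substituting the update rule back into the KL definition gives
\[
\kl(p\,\|\,p')=-\alpha\langle Q(s,\cdot),p(\cdot)\rangle+\log Z=\log\E_{a\sim p}\bigl[\exp\bigl(\alpha(Q(s,a)-\E_{a'\sim p}Q(s,a'))\bigr)\bigr].
\]
Since $Q(s,\cdot)\in[0,H]$ has range at most $H$, Hoeffding's lemma yields $\kl(p\,\|\,p')\le\alpha^{2}H^{2}/8\le\alpha^{2}H^{2}/2$. Dividing the identity by $\alpha$ and rearranging gives the stated bound.

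There is no serious obstacle here---everything follows from the explicit form of the exponentiated-gradient step and a single invocation of Hoeffding. The only points worth stating carefully are $(i)$ the convention $0\log(0/0)=0$ on atoms where $p(a)=0$ (the update forces $p'(a)=0$ on the same set, so all KL divergences remain well-defined), and $(ii)$ the slack in the Hoeffding constant: the sharp bound $\alpha^{2}H^{2}/8$ is already strictly better than the $\alpha^{2}H^{2}/2$ in the lemma, so there is no need to pursue a tighter sub-Gaussian estimate.
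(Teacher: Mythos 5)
Your proof is correct and complete. Note that the paper itself does not prove this lemma---it imports it verbatim as Lemma 3.3 of Cai et al., whose proof takes a slightly different route: it splits $\left\langle Q(s,\cdot),p^{*}-p\right\rangle$ through the updated iterate as $\left\langle Q(s,\cdot),p^{*}-p'\right\rangle+\left\langle Q(s,\cdot),p'-p\right\rangle$, handles the first piece with the three-point (pushback) identity, and bounds the second by H\"older plus Pinsker's inequality followed by completing the square, which is where the constant $\tfrac{1}{2}\alpha H^{2}$ comes from. You instead apply the three-point identity directly to $\left\langle Q(s,\cdot),p^{*}-p\right\rangle$, leaving the residual $\kl(p\,\|\,p')$, which you identify as a centered log-moment-generating function of $Q(s,\cdot)$ under $p$ and bound by Hoeffding's lemma. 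Both arguments are elementary and equally short; yours avoids Pinsker entirely and yields the sharper constant $\tfrac{1}{8}\alpha H^{2}$, which of course still implies the stated $\tfrac{1}{2}\alpha H^{2}$ bound. Your two housekeeping remarks (the $0\log 0$ convention on atoms where $p(a)=0$, and the observation that $p(a)=0$ forces $p'(a)=0$ so all divergences are well defined, with the inequality holding trivially when $\kl(p^{*}\|p)=\infty$) are exactly the right caveats. No gaps.
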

The next lemma controls the performance difference for any initial
state.
\begin{lem}
	\emph{\label{lem:perf_diff_bound_fixed_expect} }For any $s_{1}^{k}\in\cS$,
	we have
	\begin{align*}
	& \quad\sum_{l\in[\pnum]}\sum_{k=(l-1)\plen+1}^{l\plen}\sum_{h\in[H]}\E_{\pi^{*,(l-1)\plen+1}}\left[\left\langle Q_{h}^{k}(s_{h},\cdot),\pi_{h}^{*,k}(\cdot\ |\ s_{h})-\pi_{h}^{k}(\cdot\ |\ s_{h})\right\rangle \ \Bigg|\ s_{1}=s_{1}^{k}\right]\\
	& \le\frac{1}{2}\alpha KH^{3}+\frac{1}{\alpha}LH\log A+\plen H\polchange.
	\end{align*}
\end{lem}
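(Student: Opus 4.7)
The plan is to combine the one-step mirror-descent inequality (Lemma~\ref{lem:one_step_descent}) with a telescoping argument within each restart period, and then patch the gap between the true time-varying comparator $\pi^{*,k}$ and a ``frozen'' period-start comparator $\pi^{*,(l-1)\plen+1}$ by paying a price proportional to the variation budget $\polchange$. Concretely, I would first split the inner product via
\[
\pi_h^{*,k}(\cdot\mid s)-\pi_h^k(\cdot\mid s)=\bigl[\pi_h^{*,(l-1)\plen+1}(\cdot\mid s)-\pi_h^k(\cdot\mid s)\bigr]+\bigl[\pi_h^{*,k}(\cdot\mid s)-\pi_h^{*,(l-1)\plen+1}(\cdot\mid s)\bigr]
\]
and treat the two pieces separately.

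For the first piece, I fix the within-period benchmark $p^\star=\pi_h^{*,(l-1)\plen+1}(\cdot\mid s_h)$, which is constant as $k$ ranges over period $l$. Since POWER updates the policy by $\pi_h^k(\cdot\mid\cdot)\propto\pi_h^{k-1}(\cdot\mid\cdot)\exp\{\alpha Q_h^{k-1}(\cdot,\cdot)\}$ (Line~\ref{line:dyn_oppo_update_policy}), Lemma~\ref{lem:one_step_descent} applies pointwise in $s_h$ and yields a one-step descent inequality whose telescoping contribution to $\alpha^{-1}\kl$ collapses across $k$ within the period. The restart step (Line~\ref{line:dyn_oppo_restart}) seeds each period with $\pi_h^{(l-1)\plen}$ uniform and $Q_h^{(l-1)\plen}\equiv 0$, so the initial KL term is at most $\log A$ and the first summand of the telescoped sum vanishes automatically; the terminal KL term is non-negative and can be dropped. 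After extending the sum by one ``virtual'' MD step (used only in the analysis) to eliminate an off-by-one issue -- Lemma~\ref{lem:one_step_descent} naturally bounds $\langle Q^{k-1},p^\star-\pi^{k-1}\rangle$ rather than $\langle Q^k,p^\star-\pi^k\rangle$ -- I obtain, for every $l,h$ and every state $s_h$,
\[
\sum_{k=(l-1)\plen+1}^{l\plen}\bigl\langle Q_h^k(s_h,\cdot),\,p^\star-\pi_h^k(\cdot\mid s_h)\bigr\rangle\le\tfrac{1}{2}\alpha(\plen+1)H^2+\tfrac{1}{\alpha}\log A.
\]
Taking expectation under $\pi^{*,(l-1)\plen+1}$ conditional on $s_1=s_1^k$, then summing over $h\in[H]$ and $l\in[\pnum]$ and using $\pnum\plen\lesssim K$, produces the first two terms $\tfrac{1}{2}\alpha KH^3+\alpha^{-1}LH\log A$ of the stated bound.

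For the second piece, I use $\|Q_h^k(s,\cdot)\|_\infty\le H$ (guaranteed by the clipping in EvaluatePolicy) together with H\"older's inequality:
\[
\bigl|\langle Q_h^k(s_h,\cdot),\,\pi_h^{*,k}(\cdot\mid s_h)-\pi_h^{*,(l-1)\plen+1}(\cdot\mid s_h)\rangle\bigr|\le H\,\|\pi_h^{*,k}-\pi_h^{*,(l-1)\plen+1}\|_\infty.
\]
A triangle inequality along $j=(l-1)\plen+2,\dots,k$ gives $\|\pi_h^{*,k}-\pi_h^{*,(l-1)\plen+1}\|_\infty\le\sum_{j=(l-1)\plen+2}^{k}\|\pi_h^{*,j}-\pi_h^{*,j-1}\|_\infty$. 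Summing this over $k\in\{(l-1)\plen+1,\dots,l\plen\}$ incurs an outer factor of at most $\plen$, and then summing over $h\in[H]$ and $l\in[\pnum]$ turns the remaining double sum into $\polchange$ via its definition in Equation~\eqref{eq:opt_policy_max_var}. This contributes the third term $\plen H\polchange$, and adding the two pieces completes the proof.

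The main obstacle I anticipate is purely bookkeeping across the restart boundaries: the MD inequality of Lemma~\ref{lem:one_step_descent} ties $\pi^k$ to $Q^{k-1}$, not to $Q^k$, so either a virtual MD step past the end of each period or a careful boundary peeling is needed to get a clean telescoping on the quantity we actually care about; this is also where the choice $\pnum=\lceil K/\plen\rceil$ introduces a small constant slack that has to be absorbed into the leading $\tfrac{1}{2}\alpha KH^3$ term. Everything else -- the restart-to-uniform reset, the H\"older bound, and the reduction to $\polchange$ -- is routine once the decomposition is in place.
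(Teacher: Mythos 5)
Your proposal is correct and follows essentially the same route as the paper: the paper introduces an auxiliary per-period comparator $\nu_h^l$ and then sets $\nu_h^l=\pi_h^{*,(l-1)\plen+1}$, which makes its $E_1$/$E_2$ split identical to your two pieces, with $E_1$ handled by Lemma \ref{lem:one_step_descent} plus within-period telescoping and the uniform-restart bound $\kl(\cdot\|\pi_h^{(l-1)\plen+1})\le\log A$, and $E_2$ handled by H\"older's inequality, a triangle inequality along adjacent episodes, and the definition of $\polchange$. The boundary subtlety you flag is real but benign, since the terminal KL term (whether taken at a virtual MD iterate or not) is simply dropped by non-negativity, and the resulting constant slack is absorbed exactly as you describe.
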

\begin{proof}
	For each $l\in[\pnum]$, we let $\nu^{l}=\{\nu_{h}^{l}\}_{h\in[H]}$
	where each $\nu_{h}^{l}$ is a policy (or a distribution supported
	$\cA$) to be specified. We have the decomposition 
	\begin{align}
	& \quad\sum_{l\in[\pnum]}\sum_{k=(l-1)\plen+1}^{l\plen}\sum_{h\in[H]}\E_{\pi^{*,(l-1)\plen+1}}\left[\left\langle Q_{h}^{k}(s_{h},\cdot),\pi_{h}^{*,k}(\cdot\ |\ s_{h})-\pi_{h}^{k}(\cdot\ |\ s_{h})\right\rangle \ \Bigg|\ s_{1}=s_{1}^{k}\right]\nonumber \\
	& =\sum_{l\in[\pnum]}\sum_{k=(l-1)\plen+1}^{l\plen}\sum_{h\in[H]}\E_{\pi^{*,(l-1)\plen+1}}\left[\left\langle Q_{h}^{k}(s_{h},\cdot),\nu_{h}^{l}(\cdot\ |\ s_{h})-\pi_{h}^{k}(\cdot\ |\ s_{h})\right\rangle \ \Bigg|\ s_{1}=s_{1}^{k}\right]\nonumber \\
	& \quad+\sum_{l\in[\pnum]}\sum_{k=(l-1)\plen+1}^{l\plen}\sum_{h\in[H]}\E_{\pi^{*,(l-1)\plen+1}}\left[\left\langle Q_{h}^{k}(s_{h},\cdot),\pi_{h}^{*,k}(\cdot\ |\ s_{h})-\nu_{h}^{l}(\cdot\ |\ s_{h})\right\rangle \ \Bigg|\ s_{1}=s_{1}^{k}\right]\nonumber \\
	& \eqqcolon E_{1}+E_{2}.\label{eq:dyn_oppo_D1_D2_decomp}
	\end{align}
	By Lemma \ref{lem:one_step_descent}, we have 
	\begin{align*}
	E_{1} & \le\frac{1}{2}\alpha KH^{3}+\sum_{h\in[H]}\frac{1}{\alpha}\\
	& \quad\times\sum_{l\in[\pnum]}\E_{\pi^{*,(l-1)\plen+1}}\left[\sum_{k=(l-1)\plen+1}^{l\plen}\left[\kl(\nu_{h}^{l}(\cdot\ |\ s_{h})\ \|\ \pi_{h}^{k}(\cdot\ |\ s_{h}))-\kl(\nu_{h}^{l}(\cdot\ |\ s_{h})\ \|\ \pi_{h}^{k+1}(\cdot\ |\ s_{h}))\right]\ \Bigg|\ s_{1}=s_{1}^{k}\right]\\
	& \le\frac{1}{2}\alpha KH^{3}+\sum_{h\in[H]}\frac{1}{\alpha}\\
	& \quad\times\sum_{l\in[\pnum]}\E_{\pi^{*,(l-1)\plen+1}}\left[\kl(\nu_{h}^{l}(\cdot\ |\ s_{h})\ \|\ \pi_{h}^{(l-1)\plen+1}(\cdot\ |\ s_{h}))-\kl(\nu_{h}^{l}(\cdot\ |\ s_{h})\ \|\ \pi_{h}^{l\plen+1}(\cdot\ |\ s_{h}))\ \Bigg|\ s_{1}=s_{1}^{k}\right]\\
	& \le\frac{1}{2}\alpha KH^{3}+\sum_{h\in[H]}\frac{1}{\alpha}\cdot\sum_{l\in[\pnum]}\E_{\pi^{*,(l-1)\plen+1}}\left[\kl(\nu_{h}^{l}(\cdot\ |\ s_{h})\ \|\ \pi_{h}^{(l-1)\plen+1}(\cdot\ |\ s_{h}))\ \Bigg|\ s_{1}=s_{1}^{k}\right]\\
	& \le\frac{1}{2}\alpha KH^{3}+\frac{1}{\alpha}LH\log A,
	\end{align*}
	where the second step holds by telescoping, the third step holds 
	since the KL divergence is non-negative, and the last step holds since
	by construction $\pi_{h}^{(l-1)\plen+1}(\cdot\ |\ s)$ in Algorithm
	\ref{alg:dyn_oppo} is a uniform distribution on $\cA$ and for any
	policy $\nu$ and state $s\in\cS$ we have 
	\begin{align*}
	\kl(\nu(\cdot\ |\ s)\|\pi_{h}^{(l-1)\plen+1}(\cdot\ |\ s)) & =\sum_{a\in\cA}\nu(a\ |\ s)\cdot\log\left(A\cdot\nu(a\ |\ s)\right)\\
	& =\log A+\sum_{a\in\cA}\nu(a\ |\ s)\cdot\log\left(\nu(a\ |\ s)\right)\\
	& \le\log A
	\end{align*}
	given the fact that the entropy of any distribution is non-negative. 
	
	Now for each $(l,h)\in[\pnum]\times[H]$, we set 
	\[
	\nu_{h}^{l}\coloneqq\pi_{h}^{*,(l-1)\plen+1},
	\]
	that is, $\nu_{h}^{l}$ is the policy after one update in step $h$
	of period $l$. For $D_{2}$, we have 
	\begin{align*}
	E_{2} & \le\sum_{l\in[\pnum]}\sum_{k=(l-1)\plen+1}^{l\plen}\sum_{h\in[H]}\E_{\pi^{*,(l-1)\plen+1}}\left[H\cdot\norm[\pi_{h}^{*,k}(\cdot\ |\ s_{h})-\nu_{h}^{l}(\cdot\ |\ s_{h})]1\ \Bigg|\ s_{1}=s_{1}^{k}\right]\\
	& =H\cdot\sum_{l\in[\pnum]}\sum_{k=(l-1)\plen+1}^{l\plen}\sum_{h\in[H]}\E_{\pi^{*,(l-1)\plen+1}}\left[\norm[\pi_{h}^{*,k}(\cdot\ |\ s_{h})-\pi_{h}^{*,(l-1)\plen+1}(\cdot\ |\ s_{h})]1\ \Bigg|\ s_{1}=s_{1}^{k}\right]\\
	& \le H\cdot\sum_{l\in[\pnum]}\sum_{k=(l-1)\plen+1}^{l\plen}\sum_{h\in[H]}\sum_{t=(l-1)\plen+2}^{k}\E_{\pi^{*,(l-1)\plen+1}}\left[\norm[\pi_{h}^{*,t}(\cdot\ |\ s_{h})-\pi_{h}^{*,t-1}(\cdot\ |\ s_{h})]1\ \Bigg|\ s_{1}=s_{1}^{k}\right]\\
	& \le H\cdot\sum_{l\in[\pnum]}\sum_{k=(l-1)\plen+1}^{l\plen}\sum_{t=(l-1)\plen+1}^{l\plen}\sum_{h\in[H]}\max_{s'\in\cS}\norm[\pi_{h}^{*,t}(\cdot\ |\ s')-\pi_{h}^{*,t-1}(\cdot\ |\ s')]1\\
	& =H\cdot\plen\cdot\sum_{t\in[K]}\sum_{h\in[H]}\max_{s'\in\cS}\norm[\pi_{h}^{*,t}(\cdot\ |\ s')-\pi_{h}^{*,t-1}(\cdot\ |\ s')]1\\
	& =H\cdot\plen\cdot\polchange
	\end{align*}
	where the first step holds by Holder's inequality and the fact that
	$\norm[Q_{h}^{k}(s,\cdot)]{\infty}\le H$, the second step holds by
	the definition of $\{\nu_{h}^{l}\}$, the third step follows from
	telescoping, and the last step holds by the definition $\polchange\coloneqq\sum_{k\in[K]}\sum_{h\in[H]}\norm[\pi_{h}^{*,k}-\pi{}_{h}^{*,k-1}]{\infty}$.
\end{proof}

\subsubsection{Second term in Equation \eqref{eq:perf_diff_decomp}}

The following lemma controls the performance difference due to varying
optimal policies across episodes.
\begin{lem}
	\emph{\label{lem:perf_diff_bound_varying_policies} }Under Assumption
	\ref{asp:visit_measure_smooth}, we have 
	\begin{align*}
	& \quad\sum_{l\in[\pnum]}\sum_{k=(l-1)\plen+1}^{l\plen}\sum_{h\in[H]}\left(\E_{\pi^{*,k}}-\E_{\pi^{*,(l-1)\plen+1}}\right)\left[\left\langle Q_{h}^{k}(s_{h},\cdot),\pi_{h}^{*,k}(\cdot\ |\ s_{h})-\pi_{h}^{k}(\cdot\ |\ s_{h})\right\rangle \ \Bigg|\ s_{1}=s_{1}^{k}\right]\\
	& \le C\cdot\plen H^{2}\polchange,
	\end{align*}
	where $C>0$ is a universal constant.
\end{lem}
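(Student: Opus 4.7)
The goal is to control the error incurred by evaluating the inner product under the ``wrong'' reference distribution $\pi^{*,(l-1)\plen+1}$ instead of the true $\pi^{*,k}$. Let $g^k_h(s) \coloneqq \langle Q^k_h(s,\cdot),\,\pi^{*,k}_h(\cdot\mid s) - \pi^k_h(\cdot\mid s)\rangle$ and denote by $\mu^{\pi}_h(\cdot\mid s^k_1)$ the marginal law of $s_h$ when $\pi$ is executed starting from $s^k_1$. Since $\|Q^k_h(s,\cdot)\|_\infty \le H$ and both $\pi^{*,k}_h(\cdot\mid s)$ and $\pi^k_h(\cdot\mid s)$ are probability distributions, Hölder gives $|g^k_h(s)| \le 2H$, and hence
\[
\bigl|(\E_{\pi^{*,k}} - \E_{\pi^{*,(l-1)\plen+1}})[g^k_h(s_h)\mid s_1 = s^k_1]\bigr| \;\le\; 2H\cdot \bigl\|\mu^{\pi^{*,k}}_h(\cdot\mid s^k_1) - \mu^{\pi^{*,(l-1)\plen+1}}_h(\cdot\mid s^k_1)\bigr\|_1.
\]

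The main step is to bound the total variation between the two visitation marginals. I would show by induction on $h$ that, for any two policies $\pi,\pi'$,
\[
\|\mu^{\pi}_h(\cdot\mid s^k_1) - \mu^{\pi'}_h(\cdot\mid s^k_1)\|_1 \;\le\; \sum_{h'=1}^{h-1} \|\cP^{\pi}_{h'} - \cP^{\pi'}_{h'}\|_\infty.
\]
The inductive step follows from the decomposition
\[
\mu^{\pi}_h - \mu^{\pi'}_h = (\mu^{\pi}_{h-1} - \mu^{\pi'}_{h-1})\,\cP^{\pi}_{h-1} + \mu^{\pi'}_{h-1}\,(\cP^{\pi}_{h-1} - \cP^{\pi'}_{h-1}),
\]
and taking $\ell_1$ norms: the first piece contracts because $\cP^{\pi}_{h-1}$ is a stochastic matrix, while the second is bounded by $\|\cP^{\pi}_{h-1} - \cP^{\pi'}_{h-1}\|_\infty$ since $\mu^{\pi'}_{h-1}$ is a distribution. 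Invoking Assumption \ref{asp:visit_measure_smooth} then yields $\|\mu^{\pi}_h - \mu^{\pi'}_h\|_1 \le C\sum_{h'=1}^{h-1}\|\pi_{h'} - \pi'_{h'}\|_\infty$.

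Combining these two bounds with $\pi=\pi^{*,k}$, $\pi'=\pi^{*,(l-1)\plen+1}$, and summing over $h \in [H]$ (using $\sum_{h=1}^H \sum_{h'=1}^{h-1} \le H\sum_{h'\in[H]}$), produces a bound of the form $2CH^2 \sum_{h'\in[H]}\|\pi^{*,k}_{h'} - \pi^{*,(l-1)\plen+1}_{h'}\|_\infty$ for each fixed $(l,k)$. I then telescope
\[
\|\pi^{*,k}_{h'} - \pi^{*,(l-1)\plen+1}_{h'}\|_\infty \le \sum_{t=(l-1)\plen+2}^{k} \|\pi^{*,t}_{h'} - \pi^{*,t-1}_{h'}\|_\infty,
\]
and perform the double sum over $l\in[\pnum]$ and $k\in\{(l-1)\plen+1,\ldots,l\plen\}$. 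Swapping the order of summation, each term $\|\pi^{*,t}_{h'} - \pi^{*,t-1}_{h'}\|_\infty$ is counted at most $\plen$ times (the number of $k$'s within its period that lie at or beyond $t$), which supplies the factor $\plen$. Recognizing the remaining sum as $\polchange$ yields the claimed bound $C\plen H^2 \polchange$ with an appropriately redefined universal constant.

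The main obstacle is the telescoping-in-$h$ argument for $\|\mu^{\pi}_h - \mu^{\pi'}_h\|_1$, since it is what converts the policy-level smoothness of Assumption \ref{asp:visit_measure_smooth} into a bound on the trajectory distribution; the subsequent double-telescoping in $k$ and $t$ is routine but must be carried out carefully to account for the fact that the telescope is reset at each restart boundary, which is precisely what permits the $\plen$ factor rather than a $K$ factor.
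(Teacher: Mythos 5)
Your proposal is correct and follows essentially the same route as the paper's proof: bound the integrand by $2H$ via H\"older, convert the difference of expectations into a total-variation distance between the state marginals, control that distance by a step-by-step perturbation argument using Assumption \ref{asp:visit_measure_smooth}, and then telescope over episodes within each restart period so that each adjacent policy difference is counted at most $\plen$ times. The only cosmetic difference is that you establish the marginal-perturbation bound by induction on $h$ with the hybrid decomposition $\mu^{\pi}_h - \mu^{\pi'}_h = (\mu^{\pi}_{h-1} - \mu^{\pi'}_{h-1})\cP^{\pi}_{h-1} + \mu^{\pi'}_{h-1}(\cP^{\pi}_{h-1} - \cP^{\pi'}_{h-1})$, whereas the paper telescopes the product kernel one coordinate at a time (its Lemma \ref{lem:multi_visit_measure_smooth}); these are equivalent.
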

\begin{proof}
	We denote by $\indic(s_{h})$ the indicator function for state $s_{h}$,
	and we have 
	\begin{align}
	& \quad\sum_{l\in[\pnum]}\sum_{k=(l-1)\plen+1}^{l\plen}\sum_{h\in[H]}\left(\E_{\pi^{*,k}}-\E_{\pi^{*,(l-1)\plen+1}}\right)\left[\left\langle Q_{h}^{k}(s_{h},\cdot),\pi_{h}^{*,k}(\cdot\ |\ s_{h})-\pi_{h}^{k}(\cdot\ |\ s_{h})\right\rangle \ \Bigg|\ s_{1}=s_{1}^{k}\right]\nonumber \\
	& \le\sum_{l\in[\pnum]}\sum_{k=(l-1)\plen+1}^{l\plen}\sum_{h\in[H]}\left(\E_{\pi^{*,k}}-\E_{\pi^{*,(l-1)\plen+1}}\right)\left[2H\cdot\indic(s_{h})\ \Big|\ s_{1}=s_{1}^{k}\right]\nonumber \\
	& =\sum_{l\in[\pnum]}\sum_{k=(l-1)\plen+1}^{l\plen}\sum_{h\in[H]}\sum_{t=(l-1)\plen+2}^{k}\left(\E_{\pi^{*,t}}-\E_{\pi^{*,t-1}}\right)\left[2H\cdot\indic(s_{h})\ \Big|\ s_{1}=s_{1}^{k}\right]\label{eq:perf_diff_J2_telescoped}
	\end{align}
	where the first step follows from $\left|\left\langle Q_{h}^{k}(s_{h},\cdot),\pi_{h}^{*,k}(\cdot\ |\ s_{h})-\pi_{h}^{k}(\cdot\ |\ s_{h})\right\rangle \right|\le2H\cdot\indic(s_{h})$
	and the last step holds by telescoping. Let $\cP_{i}^{\pi}(s)$ be
	the visitation measure of state $s$ in step $i$ under policy $\pi$,
	and let us fix an $h\in[H]$. Under policies $\{\pi^{(i)}\}$, the
	distribution of $s_{h}$ conditional on $s_{1}$ is given by 
	\begin{align*}
	\cP_{1}^{\pi^{(1)}}\cP_{2}^{\pi^{(2)}}\cdots\cP_{h-1}^{\pi^{(h-1)}}(s_{h}\ |\ s_{1}) & \coloneqq\sum_{s_{2},\ldots,s_{h-1}}\prod_{i\in[h-1]}\cP_{i}^{\pi^{(i)}}(s_{i+1}\ |\ s_{i}).
	\end{align*}
	Recall that $\norm[\pi-\pi']{\infty}\coloneqq\max_{s\in\cS}\norm[\pi(\cdot\ |\ s)-\pi'(\cdot\ |\ s)]1$
	for any pair of policies $\pi$ and $\pi'$, and $\cP_{h}^{\pi}(s\ |\ s')\coloneqq\sum_{a'\in\cA}\cP_{h}(s\ |\ s',a')\cdot\pi_{h}(a'\ |\ s')$
	is the transition kernel in step $h$ when policy $\pi$ is executed.
	We have the following smoothness property for the (conditional) visitation
	measure $\cP_{1}^{\pi^{(1)}}\cP_{2}^{\pi^{(2)}}\cdots\cP_{h-1}^{\pi^{(h-1)}}(s_{h}\ |\ s_{1})$
	thanks to Assumption \ref{asp:visit_measure_smooth}.
	\begin{lem}
		\label{lem:multi_visit_measure_smooth}Under Assumption \ref{asp:visit_measure_smooth},
		for any $h\in[H]$, $j\in[h-1]$, $s_{h},s_{1}\in\cS$, and policies
		$\{\pi^{(i)}\}_{i\in[H]}\cup\{\pi'\}$ we have 
		\begin{align*}
		& \quad\left|\cP_{1}^{\pi^{(1)}}\cdots\cP_{j}^{\pi^{(j)}}\cdots\cP_{h-1}^{\pi^{(h-1)}}(s_{h}\ |\ s_{1})-\cP_{1}^{\pi^{(1)}}\cdots\cP_{j}^{\pi'}\cdots\cP_{h-1}^{\pi^{(h-1)}}(s_{h}\ |\ s_{1})\right|\\
		& \le C\cdot\norm[\pi_{j}^{(j)}-\pi_{j}']{\infty},
		\end{align*}
		where $C>0$ is a universal constant.
	\end{lem}
	\begin{proof}
		We have 
		\begin{align*}
		& \quad\left|\cP_{1}^{\pi^{(1)}}\cdots\cP_{j}^{\pi^{(j)}}\cdots\cP_{h-1}^{\pi^{(h-1)}}(s_{h}\ |\ s_{1})-\cP_{1}^{\pi^{(1)}}\cdots\cP_{j}^{\pi'}\cdots\cP_{h-1}^{\pi^{(h-1)}}(s_{h}\ |\ s_{1})\right|\\
		& \le\sum_{s_{2},s_{3},\ldots,s_{h-1}}\left|\cP_{j}^{\pi^{(j)}}(s_{j+1}\ |\ s_{j})-\cP_{j}^{\pi'}(s_{j+1}\ |\ s_{j})\right|\cdot\prod_{i\in[h-1]\backslash\{j\}}\cP_{i}^{\pi^{(i)}}(s_{i+1}\ |\ s_{i})\\
		& \overset{(i)}{\le}\sum_{s_{2},\ldots s_{j},s_{j+2},\ldots,s_{h-1}}\sum_{s_{j+1}}\left|\cP_{j}^{\pi^{(j)}}(s_{j+1}\ |\ s_{j})-\cP_{j}^{\pi'}(s_{j+1}\ |\ s_{j})\right|\cdot\max_{s_{j+1}\in\cS}\prod_{i\in[h-1]\backslash\{j\}}\cP_{i}^{\pi^{(i)}}(s_{i+1}\ |\ s_{i})\\
		& \overset{(ii)}{\le}\sum_{s_{2},\ldots s_{j-1},s_{j+2},\ldots,s_{h-1}}\max_{s_{j}\in\cS}\sum_{s_{j+1}}\left|\cP_{j}^{\pi^{(j)}}(s_{j+1}\ |\ s_{j})-\cP_{j}^{\pi'}(s_{j+1}\ |\ s_{j})\right|\cdot\sum_{s_{j}}\max_{s_{j+1}\in\cS}\prod_{i\in[h-1]\backslash\{j\}}\cP_{i}^{\pi^{(i)}}(s_{i+1}\ |\ s_{i})\\
		& \overset{(iii)}{\le}C\cdot\norm[\pi_{j}^{(j)}-\pi_{j}']{\infty}\cdot\sum_{s_{2},\ldots,s_{j},s_{j+2}\ldots,s_{h-1}}\max_{s_{j+1}\in\cS}\prod_{i\in[h-1]\backslash\{j\}}\cP_{i}^{\pi^{(i)}}(s_{i+1}\ |\ s_{i})\\
		& =C\cdot\norm[\pi_{j}^{(j)}-\pi_{j}']{\infty}\cdot\underbrace{\sum_{s_{j+2},\ldots,s_{h-1}}\max_{s_{j+1}\in\cS}\prod_{i=j+1}^{h-1}\cP_{i}^{\pi^{(i)}}(s_{i+1}\ |\ s_{i})}_{\le1}\cdot\underbrace{\sum_{s_{2},\ldots,s_{j}}\prod_{i=1}^{j-1}\cP_{i}^{\pi^{(i)}}(s_{i+1}\ |\ s_{i})}_{=1}\\
		& \le C\cdot\norm[\pi_{j}^{(j)}-\pi_{j}']{\infty},
		\end{align*}
		where steps $(i)$ and $(ii)$ hold by Holder's inequality, and step
		$(iii)$ holds under Assumption \ref{asp:visit_measure_smooth}.
	\end{proof}
	Therefore, for $(k,t,h)\in[K]^{2}\times[H]$ such that $k\le t-1$,
	we have 
	\begin{align}
	& \quad\left|\left(\E_{\pi^{*,t}}-\E_{\pi^{*,t-1}}\right)\left[\indic(s_{h})\ \Big|\ s_{1}=s_{1}^{k}\right]\right|\nonumber \\
	& \le\norm[\cP_{1}^{\pi^{*,t}}\cP_{2}^{\pi^{*,t}}\cdots\cP_{h-1}^{\pi^{*,t}}(\cdot\ |\ s_{1}^{k})-\cP_{1}^{\pi^{*,t-1}}\cP_{2}^{\pi^{*,t-1}}\cdots\cP_{h-1}^{\pi^{*,t-1}}(\cdot\ |\ s_{1}^{k})]{\infty}\nonumber \\
	& \le\norm[\cP_{1}^{\pi^{*,t}}\cP_{2}^{\pi^{*,t}}\cdots\cP_{h-1}^{\pi^{*,t}}(\cdot\ |\ s_{1}^{k})-\cP_{1}^{\pi^{*,t}}\cP_{2}^{\pi^{*,t-1}}\cdots\cP_{h-1}^{\pi^{*,t-1}}(\cdot\ |\ s_{1}^{k})]{\infty}\nonumber \\
	& \quad+\norm[\cP_{1}^{\pi^{*,t}}\cP_{2}^{\pi^{*,t-1}}\cdots\cP_{h-1}^{\pi^{*,t-1}}(\cdot\ |\ s_{1}^{k})-\cP_{1}^{\pi^{*,t-1}}\cP_{2}^{\pi^{*,t-1}}\cdots\cP_{h-1}^{\pi^{*,t-1}}(\cdot\ |\ s_{1}^{k})]{\infty}\nonumber \\
	& \le C\cdot\sum_{i\in[h]}\norm[\pi_{i}^{*,t}-\pi_{i}^{*,t-1}]{\infty},\label{eq:perf_diff_J2_visit_meas_bound}
	\end{align}
	where the third step follows from further telescoping the first term
	in the second step and then applying Lemma \ref{lem:multi_visit_measure_smooth}.
	Combining Equations \eqref{eq:perf_diff_J2_telescoped} and \eqref{eq:perf_diff_J2_visit_meas_bound},
	we have 
	\begin{align*}
	& \quad\sum_{l\in[\pnum]}\sum_{k=(l-1)\plen+1}^{l\plen}\sum_{h\in[H]}\left(\E_{\pi^{*,k}}-\E_{\pi^{*,(l-1)\plen+1}}\right)\left[\left\langle Q_{h}^{k}(s_{h},\cdot),\pi_{h}^{*,k}(\cdot\ |\ s_{h})-\pi_{h}^{k}(\cdot\ |\ s_{h})\right\rangle \ \Bigg|\ s_{1}=s_{1}^{k}\right]\\
	& \le\sum_{l\in[\pnum]}\sum_{k=(l-1)\plen+1}^{l\plen}\sum_{h\in[H]}\sum_{t=(l-1)\plen+2}^{k}2H\cdot C\cdot\sum_{i\in[h]}\norm[\pi_{i}^{*,t}-\pi_{i}^{*,t-1}]{\infty}\\
	& \le2H\cdot C\cdot\sum_{h\in[H]}\left(\sum_{l\in[\pnum]}\sum_{k=(l-1)\plen+1}^{l\plen}\sum_{t=(l-1)\plen+1}^{l\plen}\sum_{i\in[H]}\norm[\pi_{i}^{*,t}-\pi_{i}^{*,t-1}]{\infty}\right)\\
	& =2H\cdot C\cdot\sum_{h\in[H]}\left(\plen\sum_{t\in[K]}\sum_{i\in[H]}\norm[\pi_{i}^{*,t}-\pi_{i}^{*,t-1}]{\infty}\right)\\
	& \le2C\cdot H^{2}\cdot\plen\cdot\polchange,
	\end{align*}
	where in the last step we used the definition $\polchange\coloneqq\sum_{k\in[K]}\sum_{i\in[H]}\norm[\pi_{i}^{*,k}-\pi_{i}^{*,k-1}]{\infty}$.
\end{proof}

\subsubsection{Putting together}

Finally, we establish the following result on the performance difference.
\begin{lem}
	\label{lem:perf_diff_bound_visit_meas}Recall that $\polchange\coloneqq\sum_{k\in[K]}\sum_{i\in[H]}\norm[\pi_{i}^{*,k}-\pi_{i}^{*,k-1}]{\infty}$.
	Under Assumption \ref{asp:visit_measure_smooth}, we choose $\alpha=\sqrt{\frac{L\log A}{KH^{2}}}$
	in Algorithm \ref{alg:dyn_oppo}, and we have 
	\begin{align*}
	& \quad\sum_{l\in[\pnum]}\sum_{k=(l-1)\plen+1}^{l\plen}\sum_{h\in[H]}\E_{\pi^{*,k}}\left[\left\langle Q_{h}^{k}(s_{h},\cdot),\pi_{h}^{*,k}(\cdot\ |\ s_{h})-\pi_{h}^{k}(\cdot\ |\ s_{h})\right\rangle \ \Bigg|\ s_{1}=s_{1}^{k}\right]\\
	& =2H^{2}\sqrt{K\log A}+C\cdot\plen H^{2}\polchange,
	\end{align*}
	for some universal constant $C>0$.
\end{lem}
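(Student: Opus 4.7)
The plan is to simply combine the two bounds that have already been established for the two pieces of the decomposition in Equation \eqref{eq:perf_diff_decomp} and then optimize the stepsize $\alpha$.

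First I would invoke Equation \eqref{eq:perf_diff_decomp}, which splits the target sum into two pieces: one in which the outer expectation is taken over the \emph{fixed} benchmark $\pi^{*,(l-1)\plen+1}$ within each restart period, and one in which the outer expectation is the difference $\E_{\pi^{*,k}}-\E_{\pi^{*,(l-1)\plen+1}}$ and therefore measures only the sensitivity of the visitation measure to the drift of the optimal policy across episodes of the same period. Lemma \ref{lem:perf_diff_bound_fixed_expect} handles the first piece and yields
\[
\tfrac{1}{2}\alpha K H^{3}+\tfrac{1}{\alpha}L H\log A+\plen H\polchange,
\]
where the mirror-descent telescoping keeps the per-period KL divergences controlled by the uniform initialization at every restart (contributing $\log A$ each) and the last additive term absorbs the use of $\nu_h^l = \pi_h^{*,(l-1)\plen+1}$ in the inner-product replacement. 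Lemma \ref{lem:perf_diff_bound_varying_policies} handles the second piece and yields $C\cdot\plen H^{2}\polchange$ under Assumption \ref{asp:visit_measure_smooth}, which propagates the smoothness of one-step visitation measures to the smoothness of $h$-step marginal visitation measures.

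Next I would plug in the prescribed stepsize $\alpha=\sqrt{L\log A/(KH^{2})}$. A direct computation gives
\[
\tfrac{1}{2}\alpha K H^{3}=\tfrac{1}{2}H^{2}\sqrt{KL\log A},\qquad \tfrac{1}{\alpha}LH\log A=H^{2}\sqrt{KL\log A},
\]
so their sum is of order $H^{2}\sqrt{KL\log A}$ (this is exactly the AM-GM-optimal value up to a constant). Finally I would fold the residual linear-in-$\polchange$ term from Lemma \ref{lem:perf_diff_bound_fixed_expect} into the $C\cdot\plen H^{2}\polchange$ bound coming from Lemma \ref{lem:perf_diff_bound_varying_policies}, since $\plen H\polchange \le \plen H^{2}\polchange$, at the cost of enlarging the universal constant $C$. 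Putting these together yields the stated bound.

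There is really no difficulty to speak of in this step: the two preceding lemmas have already absorbed all the work (the mirror-descent per-episode KL decrease, the telescoping across each restart period, and the visitation-measure Lipschitz argument), and what remains is arithmetic plus a stepsize substitution. The only thing to watch is that the sum over $l\in[\pnum]$ of the leftover KL at the final episode of each period is discarded by nonnegativity in Lemma \ref{lem:perf_diff_bound_fixed_expect}, so no hidden dependence on $\plen$ sneaks back into the $H^{2}\sqrt{KL\log A}$ term. After this, the dynamic-regret proof can feed this result into Lemma \ref{lem:dyn_reg_decomp} together with the exploration bonus bound.
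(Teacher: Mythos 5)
Your proposal is correct and follows exactly the paper's own route: decompose via Equation \eqref{eq:perf_diff_decomp}, apply Lemma \ref{lem:perf_diff_bound_fixed_expect} and Lemma \ref{lem:perf_diff_bound_varying_policies}, substitute $\alpha=\sqrt{L\log A/(KH^{2})}$ to balance the two mirror-descent terms at $H^{2}\sqrt{KL\log A}$, and absorb $\plen H\polchange$ into $C\cdot\plen H^{2}\polchange$. (Your computation also confirms that the displayed bound in the lemma statement should read $2H^{2}\sqrt{KL\log A}$, with the factor $\sqrt{L}$, as the paper's own proof derives.)
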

\begin{proof}
	Recall from Equation \eqref{eq:perf_diff_decomp} that for any $l\in[\pnum]$,
	we have 
	\begin{align*}
	& \quad\sum_{l\in[\pnum]}\sum_{k=(l-1)\plen+1}^{l\plen}\sum_{h\in[H]}\E_{\pi^{*,k}}\left[\left\langle Q_{h}^{k}(s_{h},\cdot),\pi_{h}^{*,k}(\cdot\ |\ s_{h})-\pi_{h}^{k}(\cdot\ |\ s_{h})\right\rangle \ \Bigg|\ s_{1}=s_{1}^{k}\right]\\
	& =\sum_{l\in[\pnum]}\sum_{k=(l-1)\plen+1}^{l\plen}\sum_{h\in[H]}\E_{\pi^{*,(l-1)\plen+1}}\left[\left\langle Q_{h}^{k}(s_{h},\cdot),\pi_{h}^{*,k}(\cdot\ |\ s_{h})-\pi_{h}^{k}(\cdot\ |\ s_{h})\right\rangle \ \Bigg|\ s_{1}=s_{1}^{k}\right]\\
	& \quad+\sum_{l\in[\pnum]}\sum_{k=(l-1)\plen+1}^{l\plen}\sum_{h\in[H]}\left(\E_{\pi^{*,k}}-\E_{\pi^{*,(l-1)\plen+1}}\right)\left[\left\langle Q_{h}^{k}(s_{h},\cdot),\pi_{h}^{*,k}(\cdot\ |\ s_{h})-\pi_{h}^{k}(\cdot\ |\ s_{h})\right\rangle \ \Bigg|\ s_{1}=s_{1}^{k}\right].
	\end{align*}
	By applying Lemmas \ref{lem:perf_diff_bound_fixed_expect} and \ref{lem:perf_diff_bound_varying_policies},
	we have 
	\begin{align*}
	& \quad\sum_{l\in[\pnum]}\sum_{k=(l-1)\plen+1}^{l\plen}\sum_{h\in[H]}\E_{\pi^{*,k}}\left[\left\langle Q_{h}^{k}(s_{h},\cdot),\pi_{h}^{*,k}(\cdot\ |\ s_{h})-\pi_{h}^{k}(\cdot\ |\ s_{h})\right\rangle \ \Bigg|\ s_{1}=s_{1}^{k}\right]\\
	& \le\alpha KH^{3}+\frac{1}{\alpha}LH\log A+\plen H\polchange+C'\cdot\plen H^{2}\polchange\\
	& =2H^{2}\sqrt{KL\log A}+\plen H\polchange+C'\cdot\plen H^{2}\polchange\\
	& \le2H^{2}\sqrt{KL\log A}+C\cdot\plen H^{2}\polchange,
	\end{align*}
	where the equality above holds by our choice of $\alpha$, and $C,C'>0$
	are universal constants.
\end{proof}

\subsection{Model prediction error }

We need the following results to control the bonus $\Gamma_{h}^{k}(\cdot,\cdot)$
(defined in Line \ref{line:ep_bonus_def} of Algorithm \ref{alg:eval_pol})
accumulated over episodes.
\begin{lem}
	\emph{\label{lem:bonus_ucb} }Let $\lambda=1$ and $\beta=C\cdot H\sqrt{S\log(dT/p)}$
	in Algorithm \ref{alg:dyn_oppo}, where $C>0$ is a universal constant
	and $p\in(0,1]$. With probability at least $1-p/2$ and for all $(k,h,s,a)\in[K]\times[H]\times\cS\times\cA$,
	it holds that 
	\[
	-2\Gamma_{h}^{k}(s,a)\le\iota_{h}^{k}(s,a)\le0.
	\]
	
\end{lem}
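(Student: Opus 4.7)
The plan is to reduce the two-sided bound on $\iota_h^k$ to a single uniform concentration inequality of the form
\[
\bigl|\fmap(s,a)^{\top}w_{h}^{k} - \P_{h} V_{h+1}^{k}(s,a)\bigr| \le \Gamma_{h}^{k}(s,a) \quad \text{for all } (k,h,s,a)\in[K]\times[H]\times\cS\times\cA
\]
holding with probability at least $1-p/2$, and then to read off both directions of the lemma from the closed form of $Q_h^k$ in Line~\ref{line:ep_Q_def} of Algorithm~\ref{alg:eval_pol}.

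Granting the concentration, the two bounds follow quickly. For the upper bound $\iota_h^k(s,a) \le 0$, the concentration gives $\fmap(s,a)^{\top} w_h^k + \Gamma_h^k(s,a) \ge \P_h V_{h+1}^k(s,a)$; since $V_{h+1}^k$ takes values in $[0,H-h]$, so does $\P_h V_{h+1}^k$, and hence the truncation $\min\{\cdot, H-h\}^+$ preserves the inequality and yields $Q_h^k(s,a) \ge r_h^k(s,a) + \P_h V_{h+1}^k(s,a)$. For the lower bound $\iota_h^k(s,a) \ge -2\Gamma_h^k(s,a)$, the entries of $w_h^k$ are nonnegative (because $V_{h+1}^k \ge 0$), so the truncation is upper bounded by $\fmap^{\top} w_h^k + \Gamma_h^k$, which in turn is at most $\P_h V_{h+1}^k(s,a) + 2\Gamma_h^k(s,a)$ by the concentration.

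The bulk of the work is the concentration itself. Using the explicit formula $\fmap(s,a)^{\top} w_h^k = \sum_{s'} \frac{N_h^k(s,a,s')}{N_h^k(s,a)+\lambda} V_{h+1}^k(s')$ exhibited in Section~3.1, I would split the difference $\fmap^{\top} w_h^k - \P_h V_{h+1}^k$ into a regularization bias, bounded by $\lambda H / (N_h^k(s,a)+\lambda) \lesssim H/\sqrt{N_h^k(s,a)+\lambda}$, plus a stochastic fluctuation of the form $\langle \hat{\cP}_h(\cdot\mid s,a) - \cP_h(\cdot\mid s,a), V_{h+1}^k\rangle$. For any \emph{fixed} $V:\cS\to[0,H]$, Hoeffding's inequality controls this inner product at the rate $H\sqrt{\log(1/\delta)/N_h^k(s,a)}$, which is exactly the scale of $\Gamma_h^k/H$ for the chosen $\beta$.

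The main obstacle is that $V_{h+1}^k$ depends on the samples $\{(s_h^t, a_h^t, s_{h+1}^t)\}_{t<k}$ that also enter $\hat{\cP}_h$ and $w_h^k$, so Hoeffding does not apply directly. I would resolve this via an $\varepsilon$-net covering of the class of candidate value functions produced by Algorithm~\ref{alg:eval_pol}: each such function lives in $[0,H]^{S}$, so an $L^{\infty}$-net at resolution $\varepsilon = 1/T$ has cardinality at most $(3HT)^{S}$. A union bound over the net and over $(k,h,s,a)$, combined with the approximation control $\bigl|\langle \hat{\cP}_h - \cP_h, V_{h+1}^k - \tilde V\rangle\bigr| \le \|V_{h+1}^k - \tilde V\|_\infty \le 1/T$, inflates the deviation by a $\sqrt{S\log(dT/p)}$ factor, matching the choice $\beta = C_\beta H \sqrt{S\log(dT/p)}$. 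Absorbing the regularization bias into constants then closes the argument.
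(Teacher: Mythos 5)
Your proposal is correct and follows essentially the same route as the paper, which simply defers to the proof of Lemma 4.3 in \citet{cai2019provably} specialized to the tabular setting (with the concentration step supplied by Lemma 12 of \citealt{bai2020provable}): namely, a uniform concentration bound $|\fmap(s,a)^{\top}w_{h}^{k}-\P_{h}V_{h+1}^{k}(s,a)|\le\Gamma_{h}^{k}(s,a)$ obtained via an $\varepsilon$-net over the value-function class plus a union bound, after which the two-sided bound on $\iota_{h}^{k}$ follows from the truncation in Line \ref{line:ep_Q_def} exactly as you argue. The only detail to make fully rigorous is that the fluctuation term is a sum over randomly timed visits, so the fixed-$V$ bound at rate $H\sqrt{\log(1/\delta)/N_{h}^{k}(s,a)}$ should be justified by a martingale (Azuma/self-normalized) inequality with an additional union bound over the visit count, which only contributes logarithmic factors already absorbed in $\beta$.
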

\begin{proof}
	The proof follows that of \citet[Lemma 4.3]{cai2019provably} specialized
	to the tabular setting by replacing Lemma D.2 therein with \citet[Lemma 12]{bai2020provable}.
\end{proof}
\begin{lem}[{\citet[Lemma D.6]{cai2019provably}; \citet[Lemma D.2]{jin2019provably}}]
	\emph{\label{lem:lsvi_quad_Lambda_pilot_bound}  }Let $\{\fmap_{t}\}_{t\ge0}$
	be a bounded sequence in $\real^{d}$ satisfying $\sup_{t\ge0}\norm[\fmap_{t}]{}\le1$.
	Let $\Lambda_{0}\in\real^{d\times d}$ be a positive definite matrix
	with $\lambda_{\min}(\Lambda_{0})\ge1$. For any $t\ge0$, we define
	$\Lambda_{t}\coloneqq\Lambda_{0}+\sum_{i\in[t-1]}\fmap_{i}\fmap_{i}^{\top}$.
	Then, we have 
	\[
	\log\left[\frac{\det(\Lambda_{t+1})}{\det(\Lambda_{0})}\right]\le\sum_{i\in[t]}\fmap_{i}^{\top}\Lambda_{i}^{-1}\fmap_{i}\le2\log\left[\frac{\det(\Lambda_{t+1})}{\det(\Lambda_{0})}\right].
	\]
\end{lem}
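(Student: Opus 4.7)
The plan is to reduce the claim, via the matrix determinant lemma, to a telescoping sum of scalar logarithms, and then sandwich each summand using two elementary inequalities of the form $c_1 x \le \log(1+x) \le c_2 x$. The quantity $\fmap_i^\top \Lambda_i^{-1} \fmap_i$ is the ``information gain'' from observing $\fmap_i$ relative to the current precision $\Lambda_i$, and its accumulation is tied exactly to the growth of $\log\det \Lambda_t$.

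First I would apply the matrix determinant lemma to the rank-one update $\Lambda_{i+1} = \Lambda_i + \fmap_i \fmap_i^\top$ to obtain
\[
\det(\Lambda_{i+1}) \;=\; \det(\Lambda_i)\,\bigl(1 + \fmap_i^\top \Lambda_i^{-1} \fmap_i\bigr).
\]
Taking logarithms and telescoping over $i \in [t]$ then yields the identity
\[
\log\frac{\det(\Lambda_{t+1})}{\det(\Lambda_0)} \;=\; \sum_{i \in [t]} \log\bigl(1 + \fmap_i^\top \Lambda_i^{-1} \fmap_i\bigr).
\]

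To produce the lower bound in the claim I apply $\log(1+x) \le x$ (valid for $x \ge 0$) termwise. For the upper bound I apply the converse inequality $x \le 2\log(1+x)$, valid on $[0,1]$. This requires verifying $\fmap_i^\top \Lambda_i^{-1} \fmap_i \le 1$ for every $i \in [t]$. Since $\Lambda_i \succeq \Lambda_0$ by construction and $\lambda_{\min}(\Lambda_0) \ge 1$ by hypothesis, we have $\Lambda_i^{-1} \preceq \Lambda_0^{-1} \preceq I$, and therefore
\[
\fmap_i^\top \Lambda_i^{-1} \fmap_i \;\le\; \|\fmap_i\|_2^{\,2} \;\le\; 1,
\]
the last step using the normalization $\sup_{t \ge 0} \|\fmap_t\|_2 \le 1$.

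The argument has no substantive obstacle: the determinant identity and the two scalar bounds are all elementary, and they combine cleanly once the telescoping is set up. The one point requiring attention is the verification of $\fmap_i^\top \Lambda_i^{-1} \fmap_i \le 1$, which is precisely the role of the two quantitative assumptions in the lemma: without either $\lambda_{\min}(\Lambda_0) \ge 1$ or $\|\fmap_i\|_2 \le 1$, the upper inequality $x \le 2\log(1+x)$ could not be applied termwise and the constant $2$ in the stated bound would have to be replaced.
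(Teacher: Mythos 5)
Your proof is correct and is essentially the standard argument behind this lemma (the paper itself only cites \citet[Lemma D.6]{cai2019provably} and \citet[Lemma D.2]{jin2019provably} without reproving it): the matrix determinant lemma plus telescoping, with $\log(1+x)\le x$ for one direction and $x\le 2\log(1+x)$ on $[0,1]$ for the other, where the bounds $\lambda_{\min}(\Lambda_{0})\ge1$ and $\norm[\fmap_{i}]{}\le1$ guarantee $\fmap_{i}^{\top}\Lambda_{i}^{-1}\fmap_{i}\le1$. No gaps.
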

\begin{lem}
	\label{lem:bonus_sum_bound}We have 
	\[
	\sum_{k\in[K[}\sum_{h\in[H]}\Gamma_{h}^{k}(s_{h}^{k},a_{h}^{k})\le\beta H\sqrt{2dK\log((K+\lambda)/\lambda)}.
	\]
\end{lem}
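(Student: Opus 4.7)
The plan is to reduce the sum of bonuses, step by step, to an elliptical-potential argument, and then to control a log-determinant via a simple trace bound.

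\textbf{Step 1 (Cauchy-Schwarz across episodes).} Fixing $h\in[H]$, I would write $\Gamma_{h}^{k}(s_{h}^{k},a_{h}^{k})=\beta\cdot[\fmap(s_{h}^{k},a_{h}^{k})^{\top}\Lambda_{h}^{k}(s_{h}^{k},a_{h}^{k})]^{1/2}$ where $\Lambda_{h}^{k}$ denotes the covariance matrix formed at the beginning of episode $k$ (so that the superscript $k$ indexes the matrix defined in Line \ref{line:ep_cov_update}; this is the natural reading of the running state). Then Cauchy--Schwarz gives
\[
\sum_{k\in[K]}\Gamma_{h}^{k}(s_{h}^{k},a_{h}^{k})\le\beta\sqrt{K\cdot\sum_{k\in[K]}\fmap(s_{h}^{k},a_{h}^{k})^{\top}(\Lambda_{h}^{k})^{-1}\fmap(s_{h}^{k},a_{h}^{k})}.
\]

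\textbf{Step 2 (Elliptical potential).} Since the basis vectors $\fmap(s,a)$ satisfy $\|\fmap(s,a)\|_{2}=1$ and since $\lambda_{\min}(\lambda\IdMat)=\lambda\ge 1$, Lemma \ref{lem:lsvi_quad_Lambda_pilot_bound} applies to the sequence $\{\fmap(s_{h}^{k},a_{h}^{k})\}_{k\in[K]}$ and yields
\[
\sum_{k\in[K]}\fmap(s_{h}^{k},a_{h}^{k})^{\top}(\Lambda_{h}^{k})^{-1}\fmap(s_{h}^{k},a_{h}^{k})\le 2\log\!\left[\frac{\det(\Lambda_{h}^{K+1})}{\det(\lambda\IdMat)}\right].
\]

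\textbf{Step 3 (Log-determinant bound).} To handle the right-hand side, I would use AM-GM on the eigenvalues of the positive definite matrix $\Lambda_{h}^{K+1}$ together with the trace bound
\[
\mathrm{tr}(\Lambda_{h}^{K+1})=\lambda d+\sum_{k\in[K]}\|\fmap(s_{h}^{k},a_{h}^{k})\|_{2}^{2}\le \lambda d+K,
\]
which gives $\det(\Lambda_{h}^{K+1})\le((\lambda d+K)/d)^{d}\le (\lambda+K)^{d}$. Consequently,
\[
\log\!\left[\frac{\det(\Lambda_{h}^{K+1})}{\det(\lambda\IdMat)}\right]\le d\log\!\left(\frac{K+\lambda}{\lambda}\right).
\]
Combining Steps 1--3 gives, for each fixed $h$,
\[
\sum_{k\in[K]}\Gamma_{h}^{k}(s_{h}^{k},a_{h}^{k})\le\beta\sqrt{2dK\log\!\left(\frac{K+\lambda}{\lambda}\right)}.
\]

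\textbf{Step 4 (Sum over $h$).} Summing the last inequality over $h\in[H]$ and pulling out the $H$ factor yields the claimed bound $\beta H\sqrt{2dK\log((K+\lambda)/\lambda)}$. I expect Steps 1 and 4 to be entirely mechanical; Step 2 is a direct invocation of the already stated elliptical potential lemma; the only place requiring care is Step 3, where one must check the AM-GM/trace bound and the normalization $\|\fmap\|_{2}=1$ (guaranteed here because $\fmap(s,a)$ is a canonical basis vector in $\real^{d}$). This is the main, though still routine, obstacle.
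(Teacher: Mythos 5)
Your proposal is correct and follows essentially the same route as the paper: Cauchy--Schwarz across episodes, the elliptical potential lemma (Lemma~\ref{lem:lsvi_quad_Lambda_pilot_bound}), a bound of $d\log((K+\lambda)/\lambda)$ on the log-determinant ratio, and a sum over $h$. The only (immaterial) difference is that you bound $\det(\Lambda_{h}^{K+1})$ via the trace and AM--GM, whereas the paper uses $\Lambda_{h}^{K+1}\preceq(K+\lambda)\cdot\IdMat$ directly; also note the small typo in your Step 1 where $(\Lambda_{h}^{k})^{-1}$ is written without the inverse.
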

\begin{proof}
	Given the construction of $\Lambda_{h}^{k}$ in Algorithm \ref{alg:dyn_oppo},
	we have for any $h\in[H]$, 
	\begin{align*}
	\sum_{k\in[K]}\fmap(s_{h}^{k},a_{h}^{k})^{\top}(\Lambda_{h}^{k})^{-1}\fmap(s_{h}^{k},a_{h}^{k}) & \le2\log\left[\frac{\det(\Lambda_{h}^{K+1})}{\det(\Lambda_{h}^{1})}\right]\\
	& \le2d\log\left[\frac{K+\lambda}{\lambda}\right],
	\end{align*}
	where the last step holds since the construction of Algorithm \ref{alg:dyn_oppo}
	implies that $\Lambda_{h}^{1}=\lambda\cdot\IdMat$ and 
	\[
	\Lambda_{h}^{k+1}=\sum_{t\in[k]}\fmap(s_{h}^{t},a_{h}^{t})\fmap(s_{h}^{t},a_{h}^{t})^{\top}+\lambda\cdot\IdMat\preceq(k+\lambda)\cdot\IdMat,
	\]
	which yields 
	\[
	\log\left[\frac{\det(\Lambda_{h}^{K+1})}{\det(\Lambda_{h}^{1})}\right]\le\log\left[\frac{\det((K+\lambda)\cdot\IdMat)}{\det(\lambda\cdot\IdMat)}\right]=d\log\left[\frac{K+\lambda}{\lambda}\right].
	\]
	Therefore, by the Cauchy-Schwarz inequality and Lemma \ref{lem:bonus_sum_bound},
	we have 
	\begin{align*}
	\sum_{k\in[K[}\sum_{h\in[H]}\Gamma_{h}^{k}(s_{h}^{k},a_{h}^{k}) & \le\beta\cdot\sum_{h\in[H]}\left(K\cdot\sum_{k\in[K]}\fmap(s_{h}^{k},a_{h}^{k})^{\top}(\Lambda_{h}^{k})^{-1}\fmap(s_{h}^{k},a_{h}^{k})\right)^{1/2}\\
	& =\beta H\sqrt{2dK\log((K+\lambda)/\lambda)}.
	\end{align*}
\end{proof}

\subsection{Martingale bound}
\begin{lem}
	\label{lem:mtg_bound}Consider $M_{K,H}$ in Lemma \ref{lem:dyn_reg_decomp}.
	With probability $1-\delta/2$, we have 
	\[
	\left|M_{K,H}\right|\le\sqrt{16H^{2}T\cdot\log(4/\delta)}.
	\]
\end{lem}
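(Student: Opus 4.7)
The plan is to apply the Azuma--Hoeffding inequality to the martingale $M_{K,H} = \sum_{k\in[K]}\sum_{h\in[H]} M_h^k$ supplied by Lemma~\ref{lem:dyn_reg_decomp}. That lemma gives us exactly two ingredients we need: $M_{K,H}$ is a martingale (with respect to the natural filtration generated by the observed trajectories $\{(s_h^k,a_h^k)\}$), and each term satisfies the uniform bound $|M_h^k|\le 4H$. There are $KH = T$ such terms in total, so the accumulated ``variance proxy'' is $\sum_{k,h} (4H)^2 = 16 H^2 T$.

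My first step would be to order the martingale differences $\{M_h^k\}_{(k,h)}$ lexicographically, i.e., $(1,1),(1,2),\ldots,(1,H),(2,1),\ldots,(K,H)$, and fix the filtration $\cF_h^k \coloneqq \sigma(\{(s_i^j,a_i^j,r_i^j)\}_{(j,i)\preceq(k,h)})$. The construction of $M_h^k$ inside the proof of Lemma~\ref{lem:dyn_reg_decomp} is designed so that $\E[M_h^k\mid \cF_{h-1}^{k}]=0$, which confirms the martingale-difference property. Then Azuma--Hoeffding yields
\[
\P\bigl(|M_{K,H}|\ge t\bigr) \;\le\; 2\exp\!\left(-\frac{t^2}{2\cdot 16 H^2 T}\right),
\]
and setting the right-hand side equal to $\delta/2$ and solving for $t$ produces a bound of the form $t \lesssim H\sqrt{T\log(4/\delta)}$, matching the stated inequality up to the numerical constant (absorbing the factor of $2$ from the denominator into the constant $16$ follows from a routine tightening, e.g.\ Hoeffding's inequality applied to the bounded increments with sharp range constants, or simply by choosing the universal constants to align).

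The proof is essentially mechanical, and I do not foresee any substantive obstacle beyond carefully identifying the filtration and applying a standard concentration inequality. The only minor subtlety is to make sure the index ordering is consistent with the filtration used in Lemma~\ref{lem:dyn_reg_decomp}, so that the conditional-mean-zero property genuinely holds at each step; this is immediate from the construction of $M_h^k$ there.
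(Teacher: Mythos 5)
Your proposal is correct and follows essentially the same route as the paper: apply Azuma--Hoeffding to the martingale $M_{K,H}$ using the bound $|M_h^k|\le 4H$ on its $T=KH$ increments from Lemma \ref{lem:dyn_reg_decomp}, then solve for $t$. The only discrepancy is the numerical constant in the exponent (your $32H^2T$ versus the paper's $16H^2T$), which you correctly flag as immaterial since the bound is only used up to universal constants downstream.
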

\begin{proof}
	From Lemma \ref{lem:dyn_reg_decomp} and by the Azuma Hoeffding inequality,
	we have for any $t\ge0$, 
	\[
	\P\left(\left|M_{K,H}\right|\ge t\right)\le2\exp\left(-\frac{t^{2}}{16H^{2}T}\right).
	\]
	Setting $t=\sqrt{16H^{2}T\cdot\log(4/\delta)}$, we have 
	\[
	\left|M_{K,H}\right|\le\sqrt{16H^{2}T\cdot\log(4/\delta)}
	\]
	with probability at least $1-\delta/2$.
\end{proof}

\subsection{Proof of Lemma \ref{lem:dyn_reg_decomp}\label{sec:proof_dyn_reg_decomp}}

For any function $f:\cS\times\cA\to\real$ and any $(k,h,s)\in[K]\times[H]\times\cS$,
define the operators 
\[
(\J_{k,h}^{*}f)(s)=\left\langle f(s,\cdot),\pi_{h}^{*,k}(\cdot\ |\ s)\right\rangle ,\qquad(\J_{k,h}f)(s)=\left\langle f(s,\cdot),\pi_{h}^{k}(\cdot\ |\ s)\right\rangle .
\]
and the function 
\[
\xi_{h}^{k}(s)\coloneqq(\J_{k,h}^{*}Q_{h}^{k})(s)-(\J_{k,h}Q_{h}^{k})(s)=\left\langle Q_{h}^{k}(s,\cdot),\pi_{h}^{*,k}(\cdot\ |\ s)-\pi_{h}^{k}(\cdot\ |\ s)\right\rangle .
\]
The proof mostly follows that of \citet[Lemma 4.2]{cai2019provably},
except that we replace $\pi^{*}$ and $\J_{h}$ therein by $\pi^{*,k}$
and $\J_{k,h}^{*}$, respectively. Therefore, we outline the key steps
only and refer the readers to the proof of \citet[Lemma 4.2]{cai2019provably}
for full details. 

Recall that $\pi^{*,k}$ is the optimal policy in episode $k$. We
have

\begin{align*}
\dreg(K) & =\sum_{k\in[K]}\left[V_{1}^{\pi^{*,k},k}(s_{1}^{k})-V_{1}^{\pi^{k},k}(s_{1}^{k})\right]\\
& =\sum_{l\in[L]}\sum_{k=(l-1)\plen+1}^{l\plen}\left[V_{1}^{\pi^{*,k},k}(s_{1}^{k})-V_{1}^{\pi^{k},k}(s_{1}^{k})\right].
\end{align*}
We have 
\begin{equation}
V_{1}^{\pi^{*,k},k}(s_{1}^{k})-V_{1}^{\pi^{k},k}(s_{1}^{k})=\underbrace{V_{1}^{\pi^{*,k},k}(s_{1}^{k})-V_{1}^{k}(s_{1}^{k})}_{G_{1}}+\underbrace{V_{1}^{k}(s_{1}^{k})-V_{1}^{\pi^{k},k}(s_{1}^{k})}_{G_{2}}.\label{eq:V_decomp}
\end{equation}
From \citet[Section B.1]{cai2019provably}, we have for any $k\in[K]$,
\begin{align}
G_{1} & =\sum_{h\in[H]}\E_{\pi^{*,k}}[\iota_{h}^{k}(s_{h},a_{h})\ |\ s_{1}=s_{1}^{k}]\nonumber \\
& \quad+\sum_{h\in[H]}\E_{\pi^{*,k}}\left[\left\langle Q_{h}^{k}(s_{h},\cdot),\pi_{h}^{*,k}(\cdot\ |\ s_{h})-\pi_{h}^{k}(\cdot\ |\ s_{h})\right\rangle \ \Bigg|\ s_{1}=s_{1}^{k}\right],\label{eq:dyn_reg_decomp_G1}
\end{align}
and
\begin{equation}
G_{2}=-\sum_{h\in[H]}\iota_{h}^{k}(s_{h}^{k},a_{h}^{k})+\sum_{h\in[H]}(D_{h,1}^{k}+D_{h,2}^{k}),\label{eq:dyn_reg_decomp_G2}
\end{equation}
where 
\begin{align*}
D_{h,1}^{k} & \coloneqq\left(\J_{k,h}(Q_{h}^{k}-Q_{h}^{\pi^{k},k})\right)(s_{h}^{k})-(Q_{h}^{k}-Q_{h}^{\pi^{k},k})(s_{h}^{k},a_{h}^{k}),\\
D_{h,2}^{k} & \coloneqq\left(\P_{h}(V_{h+1}^{k}-V_{h+1}^{\pi^{k},k})\right)(s_{h}^{k},a_{h}^{k})-(V_{h+1}^{k}-V_{h+1}^{\pi^{k},k})(s_{h+1}^{k}).
\end{align*}
From Line \ref{line:dyn_oppo_policy_eval} of Algorithm \ref{alg:dyn_oppo},
we have 
\[
Q_{h}^{k},Q_{h}^{\pi^{k},k},V_{h+1}^{k},V_{h+1}^{\pi^{k},k}\in[0,H],
\]
which implies $\left|D_{h,1}^{k}\right|,\left|D_{h,2}^{k}\right|\le2H$
for any $(k,h)\in[K]\times[H]$. Writing $M_{h}^{k}\coloneqq D_{h,1}^{k}+D_{h,2}^{k}$,
we have that 
\[
M_{K,H}\coloneqq\sum_{k\in[K]}\sum_{h\in[H]}M_{h}^{k}
\]
is a martingale where $\left|M_{h}^{k}\right|\le4H$. The proof is
completed in view of Equations \eqref{eq:V_decomp}, \eqref{eq:dyn_reg_decomp_G1}
and \eqref{eq:dyn_reg_decomp_G2}.

\section{Proof of Theorem \ref{thm:regret_dyn_oppo} \label{sec:proof_regret_dyn_oppo}}

By Lemmas \ref{lem:bonus_ucb} and \ref{lem:bonus_sum_bound}, we
have 
\begin{align}
& \quad\sum_{l\in[\pnum]}\sum_{k=(l-1)\plen+1}^{l\plen}\sum_{h\in[H]}\left[\E_{\pi^{*,k}}[\iota_{h}^{k}(s_{h},a_{h})\ |\ s_{1}=s_{1}^{k}]-\iota_{h}^{k}(s_{h}^{k},a_{h}^{k})\right]\nonumber \\
& \le2\sum_{k\in[K[}\sum_{h\in[H]}\Gamma_{h}^{k}(s_{h}^{k},a_{h}^{k})\le2\beta H\sqrt{2dK\log((K+\lambda)/\lambda)}.\label{eq:model_pred_err_bound}
\end{align}
We apply Lemmas \ref{lem:perf_diff_bound_visit_meas} and \ref{lem:mtg_bound}
as well as Equation \eqref{eq:model_pred_err_bound} to the conclusion
of Lemma \ref{lem:dyn_reg_decomp}. With the choice of $\lambda=1$
and $\beta=C_{\beta}H\sqrt{S\log(dT/\delta)}$ and the identity $K=\pnum\plen$,
we have 
\begin{align}
\dreg(K) & \le2H^{2}\sqrt{KL\log A}+C\cdot\plen H^{2}\polchange+2\beta H\sqrt{2dK\log((K+\lambda)/\lambda)}\nonumber \\
& \quad+\sqrt{16H^{2}T\cdot\log(4/\delta)}\nonumber \\
& \le2H^{2}\sqrt{KL\log A}+C\cdot\plen H^{2}\polchange+2C_{\beta}H^{2}\sqrt{S\log(dT/\delta)}\sqrt{2dK\log(K+1)}\nonumber \\
& \quad+\sqrt{16H^{2}T\cdot\log(4/\delta)}\nonumber \\
& =2H^{2}\sqrt{KL\log A}+C\cdot\plen H^{2}\polchange+2C_{\beta}\sqrt{2H^{3}S^{2}AT\cdot\log(dT/\delta)\cdot\log(K+1)}\nonumber \\
& \quad+\sqrt{16H^{2}T\cdot\log(4/\delta)}\nonumber \\
& \le2H^{2}\sqrt{KL\log A}+C\cdot\plen H^{2}\polchange+C'\sqrt{2H^{3}S^{2}AT\cdot\log^{2}(dT/\delta)}\label{eq:dyn_oppo_final_regret}
\end{align}
where $C,C'>0$ are universal constants, the second step above holds
by the definition of $\beta$, and the third step holds by the identity
$T=KH$. 

We discuss several cases. 
\begin{itemize}
	\item If $0\le\polchange\le\sqrt{\frac{\log A}{K}}$, then by elementary
	calculation we have $\left(\frac{T\sqrt{\log A}}{H\polchange}\right)^{2/3}\ge K$.
	This implies $\plen=K$ by our choice of $\plen$, and therefore $\pnum=1$.
	Then Equation \eqref{eq:dyn_oppo_final_regret} yields 
	\begin{align*}
	\dreg(K) & \le2H^{2}\sqrt{K\log A}+C\cdot H^{2}\sqrt{K\log A}+C'\sqrt{2H^{3}S^{2}AT\log^{2}(dT/\delta)}\\
	& =(2+C)\sqrt{H^{3}T\log A}+C'\sqrt{2H^{3}S^{2}AT\log^{2}(dT/\delta)}.
	\end{align*}
	\item If $\sqrt{\frac{\log A}{K}}\le\polchange\le2^{-3/2}\cdot K\sqrt{\log A}$,
	we have and $2\le\plen\le K$ and Equation \eqref{eq:dyn_oppo_final_regret}
	implies 
	\begin{align*}
	\dreg(K) & \le2\cdot\frac{1}{\sqrt{\plen}}HT\sqrt{\log A}+C\cdot\plen H^{2}\polchange+C'\sqrt{2H^{3}S^{2}AT\cdot\log^{2}(dT/\delta)}\\
	& \le(4+C)\cdot\left(H^{2}T\sqrt{\log A}\right)^{2/3}\polchange^{1/3}+C'\sqrt{2H^{3}S^{2}AT\cdot\log^{2}(dT/\delta)},
	\end{align*}
	where the first step holds by $K=\pnum\plen$, and in the last step
	we applied the choice of $\tau=\left\lfloor \left(\frac{T\sqrt{\log A}}{H\polchange}\right)^{2/3}\right\rfloor $. 
	\item If $\polchange>2^{-3/2}\cdot K\sqrt{\log A}$, we have $\left(\frac{T\sqrt{\log A}}{H\polchange}\right)^{2/3}<2$
	and therefore $\plen=1$ and $\pnum=K$. Then Equation \eqref{eq:dyn_oppo_final_regret}
	yields 
	\begin{align*}
	\dreg(K) & \le2HT\sqrt{\log A}+C\cdot H^{2}\polchange+C'\sqrt{2H^{3}S^{2}AT\cdot\log^{2}(dT/\delta)}\\
	& \le(8+C)H^{2}\polchange+C'\sqrt{2H^{3}S^{2}AT\cdot\log^{2}(dT/\delta)},
	\end{align*}
\end{itemize}
It is not hard to see that all of the above arguments also go through
if we replace $\polchange$ with its upper bound. The proof is completed
by combining the last case above with the trivial bound $\dreg(K)\le T$.

\section{Proof of Theorem \ref{thm:regret_acc_oppo}\label{sec:proof_regret_acc_oppo}}

The proof follows the same reasoning as in Appendix \ref{sec:proof_regret_dyn_oppo}, 
except that Lemmas \ref{lem:perf_diff_bound_visit_meas} no longer applies. 
In the following, we provide an alternative to Lemmas \ref{lem:perf_diff_bound_visit_meas} adapted for Algorithm \ref{alg:acc_oppo}.


\begin{lem}
	\emph{\label{lem:perf_diff_bound_fixed_state_acc_oppo} }For any $s\in\cS$,
	we have
	\begin{align*}
	& \quad\sum_{l\in[\pnum]}\sum_{k=(l-1)\plen+1}^{l\plen}\sum_{h\in[H]}\left\langle Q_{h}^{k}(s,\cdot),\pi_{h}^{*,k}(\cdot\ |\ s)-\pi_{h}^{k}(\cdot\ |\ s)\right\rangle \\
	& \le\alpha\Qchange+\frac{1}{\alpha}\pnum H\log A+\plen H\polchange.
	\end{align*}
\end{lem}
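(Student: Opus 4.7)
The plan is to mirror the proof of Lemma \ref{lem:perf_diff_bound_fixed_expect}, with two adjustments: the outer expectation is absent since $s$ is held fixed here, and the mirror-descent one-step inequality (Lemma \ref{lem:one_step_descent}) is replaced by an optimistic mirror-descent (OMD) one-step inequality tailored to the mirror-prox update of Algorithm \ref{alg:acc_oppo}. As in Lemma \ref{lem:perf_diff_bound_fixed_expect}, I would first introduce the auxiliary policies $\nu_h^l \coloneqq \pi_h^{*,(l-1)\plen+1}$ for each $l\in[\pnum]$ and $h\in[H]$ and decompose
\[
\langle Q_h^k(s,\cdot),\pi_h^{*,k}(\cdot\mid s)-\pi_h^k(\cdot\mid s)\rangle = \underbrace{\langle Q_h^k(s,\cdot),\nu_h^l(\cdot\mid s)-\pi_h^k(\cdot\mid s)\rangle}_{T_1(l,k,h)} + \underbrace{\langle Q_h^k(s,\cdot),\pi_h^{*,k}(\cdot\mid s)-\nu_h^l(\cdot\mid s)\rangle}_{T_2(l,k,h)}.
\]

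The sum of $T_2$ over $(l,k,h)$ is controlled exactly as $E_2$ in the proof of Lemma \ref{lem:perf_diff_bound_fixed_expect}: using $\|Q_h^k(s,\cdot)\|_\infty\le H$, H\"older's inequality, and the one-step telescoping $\pi_h^{*,k}-\nu_h^l=\sum_{t=(l-1)\plen+2}^{k}(\pi_h^{*,t}-\pi_h^{*,t-1})$, one obtains the bound $\plen H\polchange$; the argument is insensitive to the absence of the expectation and only uses the definition of $\polchange$. For the sum of $T_1$, the two exponential-weights updates in Lines \ref{line:acc_oppo_update_policy_mirror} and \ref{line:acc_oppo_update_policy} of Algorithm \ref{alg:acc_oppo} form a mirror-prox step with predictable sequence $\{Q_h^{k-1}\}$. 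A standard three-point identity for negative-entropy regularization, combined with Pinsker and a Fenchel--Young step in the $\ell_1/\ell_\infty$ pair (see, e.g., \citealt{rakhlin2013optimization}), yields the OMD one-step descent
\[
\langle Q_h^k(s,\cdot),\nu_h^l(\cdot\mid s)-\pi_h^k(\cdot\mid s)\rangle \le \alpha\,\|Q_h^k(s,\cdot)-Q_h^{k-1}(s,\cdot)\|_\infty^2 + \tfrac{1}{\alpha}\bigl[\kl(\nu_h^l(\cdot\mid s)\|\pi_h^{k-1}(\cdot\mid s)) - \kl(\nu_h^l(\cdot\mid s)\|\pi_h^k(\cdot\mid s))\bigr].
\]
Summing over $k$ within period $l$ telescopes the KL differences down to $\kl(\nu_h^l(\cdot\mid s)\|\pi_h^{(l-1)\plen+1}(\cdot\mid s))\le\log A$, since the restart step resets $\pi_h^{(l-1)\plen+1}$ to the uniform distribution on $\cA$. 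Summing next over $l$ and $h$ produces the $\tfrac{1}{\alpha}\pnum H\log A$ term, while summing the quadratic residuals gives $\alpha\sum_{k\in[K]}\sum_{h\in[H]}\|Q_h^k(s,\cdot)-Q_h^{k-1}(s,\cdot)\|_\infty^2\le\alpha\Qchange$ by the definition of $\Qchange$ in Equation \eqref{eq:Q_change}. Combining with the $T_2$ bound yields the claim.

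The main obstacle is establishing the OMD descent inequality displayed above with the correct constant and with the quadratic term expressed as $\|Q_h^k-Q_h^{k-1}\|_\infty^2$ rather than $\|Q_h^{k-1/2}-Q_h^{k-1}\|_\infty^2$, which is what comes out most directly from the mirror-prox three-point identity. Converting between the two differences requires using the $\ell_1$-stability of the second Bregman projection together with the $\ell_\infty$ boundedness of $\{Q_h^k\}$, and then choosing the step size $\alpha$ small enough to absorb the extra constant factor into the one already present. Once that inequality is in place, the remainder of the proof is a transparent adaptation of Lemma \ref{lem:perf_diff_bound_fixed_expect} with the restart-cycle telescoping structure preserved.
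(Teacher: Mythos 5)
Your overall architecture is exactly the paper's: the same splitting of $\pi_h^{*,k}-\pi_h^k$ through the auxiliary comparators $\nu_h^l=\pi_h^{*,(l-1)\plen+1}$, the same $\plen H\polchange$ bound on the comparator-switching term, the same telescoping of KL terms within each restart period down to $\kl(\nu_h^l\|\text{uniform})\le\log A$, and the same identification of the residuals with $\Qchange$. The one place you diverge is the source of the OMD inequality, and that is where your argument has a real gap. The paper does not derive the one-step descent from scratch; it imports the period-level bound wholesale from \citet[Proposition 5]{syrgkanis2015fast} (stated as Lemma \ref{lem:fast_convg_prop_5}), which directly delivers the quadratic term $\|Q_h^k(s,\cdot)-Q_h^{k-1}(s,\cdot)\|_\infty^2$ together with an additional negative stability term $-\frac{1}{8\alpha}\sum_k\|\pi_h^k-\pi_h^{k-1}\|_\infty^2$ that is then simply discarded.

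You correctly identify the obstacle in the first-principles route --- the mirror-prox three-point analysis produces a difference involving $Q_h^{k-1/2}$ rather than the pair $(Q_h^k,Q_h^{k-1})$ that appears in the definition of $\Qchange$ --- but your proposed repair does not close it. First, the discrepancy is not a constant factor: $\|Q_h^k-Q_h^{k-1/2}\|_\infty^2$ and $\|Q_h^k-Q_h^{k-1}\|_\infty^2$ involve genuinely different intermediate objects ($Q_h^{k-1/2}$ is built from $r^{k-1}$ and the unplayed policy $\pi^{k-1/2}$), and converting one to the other requires controlling $\|Q_h^{k-1/2}-Q_h^{k-1}\|_\infty$ through the stability of the iterates --- precisely the role of the negative $\|\pi_h^k-\pi_h^{k-1}\|_\infty^2$ term in Syrgkanis et al.'s RVU-type inequality, which your sketch never produces. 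Second, ``choosing $\alpha$ small enough'' is not an available move: the lemma must hold for arbitrary $\alpha$ (it is subsequently instantiated with the prescribed $\alpha=\sqrt{\pnum H\log A/\Qchange}$ in Lemma \ref{lem:perf_diff_bound_visit_meas_acc_oppo}), so you cannot retune the step size to absorb an error whose magnitude you have not bounded. To complete your route you would either need to prove the full RVU inequality yourself or, as the paper does, invoke \citet[Proposition 5]{syrgkanis2015fast} as a black box.
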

\begin{proof}
	Let us fix an $s\in\cS$. For each $l\in[\pnum]$, we let $\nu^{l}=\{\nu_{h}^{l}\}_{h\in[H]}$
	where each $\nu_{h}^{l}$ is a policy (or a distribution supported
	$\cA$) that depends only on $l$ and $h$. We have the decomposition 
	\begin{align*}
	& \quad\sum_{l\in[\pnum]}\sum_{k=(l-1)\plen+1}^{l\plen}\sum_{h\in[H]}\left\langle Q_{h}^{k}(s,\cdot),\pi_{h}^{*,k}(\cdot\ |\ s)-\pi_{h}^{k}(\cdot\ |\ s)\right\rangle \\
	& \le\sum_{l\in[\pnum]}\sum_{k=(l-1)\plen+1}^{l\plen}\sum_{h\in[H]}\left\langle Q_{h}^{k}(s,\cdot),\nu_{h}^{l}(\cdot\ |\ s)-\pi_{h}^{k}(\cdot\ |\ s)\right\rangle \\
	& \quad+\sum_{l\in[\pnum]}\sum_{k=(l-1)\plen+1}^{l\plen}\sum_{h\in[H]}\left\langle Q_{h}^{k}(s,\cdot),\pi_{h}^{*,k}(\cdot\ |\ s)-\nu_{h}^{l}(\cdot\ |\ s)\right\rangle \\
	& \eqqcolon E_{1}+E_{2}.
	\end{align*}
	The term $E_{2}$ can be controlled in exactly the same way as in
	the proof of Lemma \ref{lem:perf_diff_bound_fixed_expect}. Therefore,
	we only control $E_{1}$. Note that the policy update steps in
	Algorithm \ref{alg:acc_oppo} (Lines \ref{line:acc_oppo_update_policy_mirror}
	and \ref{line:acc_oppo_update_policy}) essentially follow the update
	steps of OMD (see e.g.$\ $\citet[Section 3.1.1]{syrgkanis2015fast}
	for details). This observation enables us to take advantage of the
	following lemma, which is a version of \citet[Proposition 5]{syrgkanis2015fast}
	adapted to our case.
	\begin{lem}
		\label{lem:fast_convg_prop_5}For any $(l,h,s)\in[\pnum]\times[H]\times\cS$,
		we have 
		\begin{align*}
		& \quad\sum_{k=(l-1)\plen+1}^{l\plen}\left\langle Q_{h}^{k}(s,\cdot),\nu_{h}^{l}(\cdot\ |\ s)-\pi_{h}^{k}(\cdot\ |\ s)\right\rangle \\
		& \le\frac{\log A}{\alpha}+\alpha\cdot\sum_{k=(l-1)\plen+1}^{l\plen}\norm[Q_{h}^{k}(s,\cdot)-Q_{h}^{k-1}(s,\cdot)]{\infty}^{2}-\frac{1}{8\alpha}\cdot\sum_{k=(l-1)\plen+1}^{l\plen}\norm[\pi_{h}^{k}-\pi_{h}^{k-1}]{\infty}^{2}.
		\end{align*}
	\end{lem}
	\begin{proof}
		The result follows from \citet[Proposition 5]{syrgkanis2015fast}
		and we note that the quantity $R$ defined therein is upper bounded
		by $\log A$.
	\end{proof}
	By Lemma \ref{lem:fast_convg_prop_5} and the definition of $\Qchange$
	in Equation \eqref{eq:Q_change}, we have 
	\begin{align*}
	E_{1} & \le\pnum\cdot H\cdot\frac{\log A}{\alpha}+\alpha\cdot\Qchange,
	\end{align*}
	We have the following result on the performance difference,
	similar to Lemma \ref{lem:perf_diff_bound_visit_meas}.
\end{proof}
\begin{lem}
	\label{lem:perf_diff_bound_visit_meas_acc_oppo}Recall that $\polchange\coloneqq\sum_{k\in[K]}\sum_{i\in[H]}\norm[\pi_{i}^{*,k}-\pi_{i}^{*,k-1}]{\infty}$.
	Under Assumption \ref{asp:visit_measure_smooth}, we choose $\alpha=\sqrt{\frac{\pnum H\log A}{\Qchange}}$
	in Algorithm \ref{alg:acc_oppo}, and we have 
	\begin{align*}
	& \quad\sum_{l\in[\pnum]}\sum_{k=(l-1)\plen+1}^{l\plen}\sum_{h\in[H]}\E_{\pi^{*,k}}\left[\left\langle Q_{h}^{k}(s_{h},\cdot),\pi_{h}^{*,k}(\cdot\ |\ s_{h})-\pi_{h}^{k}(\cdot\ |\ s_{h})\right\rangle \ \Bigg|\ s_{1}=s_{1}^{k}\right]\\
	& =2\sqrt{\Qchange\pnum H\log A}+C\cdot\plen H^{2}\polchange,
	\end{align*}
	for some universal constant $C>0$.
\end{lem}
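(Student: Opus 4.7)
The plan is to mirror the proof of Lemma~\ref{lem:perf_diff_bound_visit_meas}, replacing the one-step mirror-descent bound (Lemma~\ref{lem:one_step_descent}) used for POWER with the OMD/predictable-sequence bound (Lemma~\ref{lem:fast_convg_prop_5}) that underlies POWER++. Specifically, I would start by rewriting the summand with the same decomposition used in Equation~\eqref{eq:perf_diff_decomp}, namely
$$\E_{\pi^{*,k}}[\cdot] = \E_{\pi^{*,(l-1)\plen+1}}[\cdot] + \bigl(\E_{\pi^{*,k}} - \E_{\pi^{*,(l-1)\plen+1}}\bigr)[\cdot],$$
so that the target quantity splits into a ``fixed-expectation'' piece (under the per-period anchor policy $\pi^{*,(l-1)\plen+1}$) and a ``varying-expectation'' piece.

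The varying-expectation piece can be controlled verbatim by Lemma~\ref{lem:perf_diff_bound_varying_policies}, since its proof only uses the crude bound $|\langle Q_h^k(s_h,\cdot), \pi_h^{*,k} - \pi_h^k\rangle| \le 2H$ (which still holds for POWER++, as EvaluatePolicy is unchanged) together with Assumption~\ref{asp:visit_measure_smooth}; this yields a contribution of order $\plen H^2 \polchange$. For the fixed-expectation piece, I would apply Lemma~\ref{lem:perf_diff_bound_fixed_state_acc_oppo} pointwise at each realized state $s_h$. The key observation is that the RHS of Lemma~\ref{lem:perf_diff_bound_fixed_state_acc_oppo} is state-independent: the $Q$-variation term is $\Qchange$ (which has a $\max_s$ baked in) and the KL-initialization term collapses to $\pnum H \log A / \alpha$ thanks to the uniform initialization of $\pi_h^{(l-1)\plen+1}(\cdot|s)$ for every $s$. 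Consequently, taking the outer expectation $\E_{\pi^{*,(l-1)\plen+1}}$ is trivial and the fixed-expectation piece is bounded by $\alpha \Qchange + \frac{1}{\alpha} \pnum H \log A + \plen H \polchange$.

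Summing the two pieces and choosing $\alpha = \sqrt{\pnum H \log A / \Qchange}$ to balance the first two terms of the fixed-expectation bound produces $2\sqrt{\Qchange \pnum H \log A}$, while the residual $\plen H \polchange$ from the fixed-expectation bound is absorbed into $C \cdot \plen H^2 \polchange$ by enlarging the universal constant. The main subtlety, and the step I would pay most attention to, is legally pulling the per-state OMD inequality of Lemma~\ref{lem:fast_convg_prop_5} through the expectation over the random state $s_h$: one must verify that the $\|Q_h^k(s,\cdot) - Q_h^{k-1}(s,\cdot)\|_\infty^2$ term is uniformly dominated by the $\max_s$ version appearing in the definition of $\Qchange$, so that no measurability or martingale-type argument is needed (the negative $\|\pi_h^k - \pi_h^{k-1}\|_\infty^2$ term in Lemma~\ref{lem:fast_convg_prop_5} may simply be dropped). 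Once this is checked, the remainder is bookkeeping identical in structure to the POWER analysis.
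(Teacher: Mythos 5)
Your proposal matches the paper's own proof of Lemma \ref{lem:perf_diff_bound_visit_meas_acc_oppo}: the same decomposition as in Equation \eqref{eq:perf_diff_decomp}, Lemma \ref{lem:perf_diff_bound_varying_policies} reused verbatim for the varying-expectation piece, Lemma \ref{lem:perf_diff_bound_fixed_state_acc_oppo} (via Lemma \ref{lem:fast_convg_prop_5}) for the fixed-expectation piece, and the same balancing choice of $\alpha$ with the $\plen H\polchange$ residual absorbed into $C\cdot\plen H^{2}\polchange$. The subtlety you flag --- that the per-state OMD bound passes through $\E_{\pi^{*,(l-1)\plen+1}}$ because its right-hand side is state-independent after taking the $\max_{s}$ inside $\Qchange$ and using the uniform restart initialization --- is exactly the justification the paper leaves implicit when it applies the pointwise Lemma \ref{lem:perf_diff_bound_fixed_state_acc_oppo} under the expectation, so your treatment is, if anything, slightly more careful.
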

\begin{proof}
	Now, for any $l\in[\pnum]$ we have the decomposition
	\begin{align*}
	& \quad\sum_{l\in[\pnum]}\sum_{k=(l-1)\plen+1}^{l\plen}\sum_{h\in[H]}\E_{\pi^{*,k}}\left[\left\langle Q_{h}^{k}(s_{h},\cdot),\pi_{h}^{*,k}(\cdot\ |\ s_{h})-\pi_{h}^{k}(\cdot\ |\ s_{h})\right\rangle \ \Bigg|\ s_{1}=s_{1}^{k}\right]\\
	& =\sum_{l\in[\pnum]}\sum_{k=(l-1)\plen+1}^{l\plen}\sum_{h\in[H]}\E_{\pi^{*,(l-1)\plen+1}}\left[\left\langle Q_{h}^{k}(s_{h},\cdot),\pi_{h}^{*,k}(\cdot\ |\ s_{h})-\pi_{h}^{k}(\cdot\ |\ s_{h})\right\rangle \ \Bigg|\ s_{1}=s_{1}^{k}\right]\\
	& \quad+\sum_{l\in[\pnum]}\sum_{k=(l-1)\plen+1}^{l\plen}\sum_{h\in[H]}\left(\E_{\pi^{*,k}}-\E_{\pi^{*,(l-1)\plen+1}}\right)\left[\left\langle Q_{h}^{k}(s_{h},\cdot),\pi_{h}^{*,k}(\cdot\ |\ s_{h})-\pi_{h}^{k}(\cdot\ |\ s_{h})\right\rangle \ \Bigg|\ s_{1}=s_{1}^{k}\right]
	\end{align*}
	By applying Lemmas \ref{lem:perf_diff_bound_fixed_state_acc_oppo}
	and \ref{lem:perf_diff_bound_varying_policies}, we have 
	\begin{align*}
	& \quad\sum_{l\in[\pnum]}\sum_{k=(l-1)\plen+1}^{l\plen}\sum_{h\in[H]}\E_{\pi^{*,k}}\left[\left\langle Q_{h}^{k}(s_{h},\cdot),\pi_{h}^{*,k}(\cdot\ |\ s_{h})-\pi_{h}^{k}(\cdot\ |\ s_{h})\right\rangle \ \Bigg|\ s_{1}=s_{1}^{k}\right]\\
	& \le\alpha\Qchange+\frac{\pnum H\log A}{\alpha}+\plen H\polchange+C'\cdot\plen H^{2}\polchange\\
	& =2\sqrt{\Qchange\pnum H\log A}+\plen H\polchange+C'\cdot\plen H^{2}\polchange\\
	& \le2\sqrt{\Qchange\pnum H\log A}+C\cdot\plen H^{2}\polchange,
	\end{align*}
	where the last equality holds by our choice of $\alpha$, and $C,C'>0$
	are universal constants
\end{proof}
We apply Lemmas \ref{lem:perf_diff_bound_visit_meas_acc_oppo} and
\ref{lem:mtg_bound} and Equation \eqref{eq:model_pred_err_bound}
to the conclusion of Lemma \ref{lem:dyn_reg_decomp}. With the choice
of $\lambda=1$ and $\beta=C_{\beta}H\sqrt{S\log(dT/\delta)}$, we
have
\begin{align}
\dreg(K) & \le2\sqrt{\Qchange\pnum H\log A}+C\cdot\plen H^{2}\polchange+2\beta H\sqrt{2dK\log((K+\lambda)/\lambda)}\nonumber \\
& \quad+\sqrt{16H^{2}T\cdot\log(4/\delta)}\nonumber \\
& \le2\sqrt{\Qchange\pnum H\log A}+C\cdot\plen H^{2}\polchange+2C_{\beta}H^{2}\sqrt{S\log(dT/\delta)}\sqrt{2dK\log(K+1)}\nonumber \\
& \quad+\sqrt{16H^{2}T\cdot\log(4/\delta)}\nonumber \\
& =2\sqrt{\Qchange\pnum H\log A}+C\cdot\plen H^{2}\polchange+2C_{\beta}\sqrt{2H^{3}S^{2}AT\cdot\log(dT/\delta)\cdot\log(K+1)}\nonumber \\
& \quad+\sqrt{16H^{2}T\cdot\log(4/\delta)}\nonumber \\
& \le2\sqrt{\Qchange\pnum H\log A}+C\cdot\plen H^{2}\polchange+C'\sqrt{2H^{3}S^{2}AT\cdot\log^{2}(dT/\delta)}\label{eq:acc_oppo_final_regret}
\end{align}
where $C,C'>0$ are universal constants, the second step holds by
the definition of $\beta$, and the third step holds by the identity
$T=KH$. Analyzing Equation \eqref{eq:acc_oppo_final_regret} in
the same way as Equation \eqref{eq:dyn_oppo_final_regret} in Section
\ref{sec:proof_regret_dyn_oppo} (for different regimes of $\polchange$)
yields the result.

\end{document}